\theoremstyle{definition}
\newtheorem{definition}{Definition}[section]
\theoremstyle{plain}
\newtheorem{theorem}[definition]{Theorem}
\newtheorem{lemma}[definition]{Lemma}
\newtheorem{corollary}[definition]{Corollary}
\newtheorem{assumption}[definition]{Assumption}
\theoremstyle{remark}
\newtheorem*{remark}{Remark}
\numberwithin{equation}{section}
\def\eqref#1{equation~\ref{#1}}
\def\1{\bm{1}}
\def\rx{{\textnormal{x}}}
\def\rvx{{\mathbf{x}}}
\def\vtheta{{\bm{\theta}}}
\def\vb{{\bm{b}}}
\def\vg{{\bm{g}}}
\def\vh{{\bm{h}}}
\def\vu{{\bm{u}}}
\def\vw{{\bm{w}}}
\def\vx{{\bm{x}}}
\def\mA{{\bm{A}}}
\def\mI{{\bm{I}}}
\def\mU{{\bm{U}}}
\def\mV{{\bm{V}}}
\def\mW{{\bm{W}}}
\DeclareMathAlphabet{\mathsfit}{\encodingdefault}{\sfdefault}{m}{sl}
\SetMathAlphabet{\mathsfit}{bold}{\encodingdefault}{\sfdefault}{bx}{n}
\newcommand{\E}{\mathbb{E}}
\newcommand{\Ls}{\mathcal{L}}
\newcommand{\R}{\mathbb{R}}
\DeclareMathOperator*{\argmin}{arg\,min}
\newcommand{\vomega}{\boldsymbol{\omega}}
\newcommand{\uN}{\mathrm{N}}
\newcommand{\uP}{\mathrm{P}}
\newcommand{\ud}{d}
\newcommand{\cB}{\mathcal{B}}
\newcommand{\cF}{\mathcal{F}}
\newcommand{\cG}{\mathcal{G}}
\newcommand{\cH}{\mathcal{H}}
\newcommand{\cJ}{\mathcal{J}}
\newcommand{\cO}{\mathcal{O}}
\newcommand{\cP}{\mathcal{P}}
\newcommand{\bS}{\mathbb{S}}
\newcommand{\ls}{\ell}
\newcommand{\one}[1]{\mathbf{1}_{#1}}
\begin{document}

\title{A Priori Estimates of the Population Risk for Residual Networks}
\author[1,2,3]{Weinan E}
\author[2]{Chao Ma}
\author[2]{Qingcan Wang}
\affil[1]{Department of Mathematics, Princeton University}
\affil[2]{Program in Applied and Computational Mathematics, Princeton
University}
\affil[3]{Beijing Institute of Big Data Research}
\affil[ ]{\texttt{weinan@math.princeton.edu, \{chaom,qingcanw\}@princeton.edu}}
\date{}
\maketitle

\begin{abstract}
  Optimal a priori estimates are derived for the population risk, also known as
  the generalization error, of a regularized residual network model. An
  important part of the regularized model is the usage of a new path norm,
  called the \emph{weighted path norm}, as the regularization term. The weighted
  path norm treats the skip connections and the nonlinearities differently so
  that paths with more nonlinearities are regularized by larger weights. The
  error estimates are a priori in the sense that the estimates depend only on
  the target function, not on the parameters obtained in the training process.
  The estimates are optimal, in a high dimensional setting, in the sense that
  both the bound for the approximation and estimation errors are comparable to
  the Monte Carlo error rates. A crucial step in the proof is to establish an
  optimal bound for the Rademacher complexity of the residual networks.
  Comparisons are made with existing norm-based generalization error bounds.

  \paragraph{\small Key words}
  \emph{a priori} estimate, residual network, weighted path norm
\end{abstract}

\section{Introduction}%
\label{sec:intro}

One of the major theoretical challenges in machine learning is to understand, in
a high dimensional setting, the generalization error for deep neural networks,
especially residual networks~\cite{he2016deep} which have become one of the
default choices for many machine learning tasks. Since the networks used in
practice are usually over-parameterized, many recent attempts have been made to
derive bounds that do not deteriorate as the number of parameters grows. In this
regard, the norm-based bounds use some appropriate norms of the parameters to
control the generalization error~\cite{neyshabur2015norm,
bartlett2017spectrally, golowich2017size, barron2018approximation}. Other bounds
based on the idea of compressing the networks~\cite{arora2018stronger} or the
use of the Fisher-Rao information~\cite{liang2017fisher} have also been
proposed. While these generalization bounds differ in many ways, they have one
thing in common: they depend on information about the final parameters obtained
in the training process. Following~\cite{e2018priori}, we call them \emph{a
posteriori} bounds. In this paper, we derive \emph{a priori} estimates of the
population risk for deep residual networks. Compared to the a posteriori
estimates mentioned above, our bounds depend only on the target function and the
network structure (e.g.\ the depth and the width). In addition, our bounds scale
optimally with the network depths and the size of the training data: the
approximation error term scales as $\cO(1/L)$ with the depth $L$, while the
estimation error term scales as $\cO(1/\sqrt{n})$ with the size of the training
data $n$ (independent of the depth), both are comparable to the Monte Carlo
rate.

Our interest in deriving a priori estimates comes from an analogy with finite
element methods (FEM)~\cite{ciarlet2002finite, ainsworth2011posteriori}. Both a
priori and a posteriori error estimates are common in the theoretical analysis
of FEM.\@ In fact, in FEM a priori estimates appeared much earlier and are still
more common than a posteriori estimates~\cite{ciarlet2002finite}, contrary to
the situation in machine learning. Although a priori bounds can not be readily
evaluated due to the fact that the required information about the target
function is not available to us, they provide much insight about the qualitative
behavior of different methods. In the context of machine learning, they also
provide a qualitative comparison between different norms, as we show later.
Most importantly, one can only expect the generalization error to be small for
certain class of target functions, a priori estimates is the most natural way to
encode such information in the error analysis.

The second important point of our approach is to regularize the model. Even
though regularization is quite common in machine learning, neural network models
seem to perform quite well without explicit regularization, as long as one is
good at tuning the hyper-parameters in the training process. For this reason,
there has been some special interest in studying the so-called ``implicit
regularization'' effect. Nevertheless, we feel that the study of properly
regularized models is still of interest, particularly in the over-parametrized
regime, for several reasons:
\begin{enumerate}
  \item These regularized models are much more robust. In other words, one does
    not have to search for the better ones among all the global minimizers using
    excessive tuning. 
  \item They allow us to get an idea about how small the test accuracy can be
    among all the global minimizers.
  \item They can potentially help us to find good minimizers (in terms of test
    accuracy) for the un-regularized model.
\end{enumerate}

For the case of two-layer neural network models, the analytical and practical
advantages of a priori analysis have already been demonstrated
in~\cite{e2018priori}. It was shown there that optimal error rates can be
established for appropriately regularized two-layer neural networks models, and
the accuracy of these models behaves in a much more robust fashion than the
vanilla models without regularization. In this paper, we set out to extend the
work in~\cite{e2018priori} for shallow neural network models to deep ones. We
choose residual network as a starting point.

To derive our a priori estimate, we design a new parameter-based norm for deep
residual networks called the \emph{weighted path norm}, and use this norm as a
regularization term to formulate a regularized problem. Unlike traditional path
norms, our weighted path norm puts more weight on paths that go through more
nonlinearities. In this way, we penalize paths with many nonlinearities and
hence control the complexity of the functions represented by networks with a
bounded norm. By using the weighted path norm as the regularization term, we can
strike a balance between the empirical risk and the complexity of the model, and
thus a balance between the approximation error and the estimation error. This
allows us to prove that the minimizer of the regularized model has the optimal
error rate in terms of the population risk. A comparison with existing
parameter-based norms shows that it is nontrivial to find such balance. 

The rest of the paper is organized as follows. In Section~\ref{sec:sketch}, we
set up the problem and state our main theorem. We also sketch the main ideas in
the proof. In Section~\ref{sec:proof} we give the full proof of the theorems. In
Section~\ref{sec:compare}, we compare our result with related works and put
things into perspective. Conclusions are drawn in Section~\ref{sec:conclusion}.

\paragraph{Notations}
In this paper, we let $\Omega = {[0, 1]}^d$ be the unit hypercube, and consider
target functions with domain $\Omega$. Let $\pi$ be a probability measure on
$\Omega$, for any function $f: \Omega \to \R$, let $\|f\|$ be the $l_2$ norm of
$f$ based on $\pi$,
\begin{equation}
  \|f\|^2 = \int_\Omega f^2(\vx)\pi(d\vx).
\end{equation}
Let $\sigma$ be the ReLU activation function used in the neural network models:
$\sigma(x) = \max\{x, 0\}$. For a vector $\vx$, $\sigma(\vx)$ is a vector of the
same size obtained by applying ReLU component-wise.

\section{Setup of the problem and the main theorem}%
\label{sec:sketch}

\subsection{Setup}

We consider the regression problem and residual networks with ReLU activation
$\sigma(\cdot)$. Assume that the target function $f^*: \Omega \to [0, 1]$. Let
the training set be ${\{(\vx_i, y_i)\}}_{i=1}^n$, where the $\vx_i$'s are
independently sampled from an underlying distribution $\pi$ and $y_i =
f^*(\vx_i)$. Later we will consider problems with noise. 

Consider the following residual network architecture with skip connection in
each layer%
\footnote{In practice, residual networks may use skip connections every several
layers. We consider skip connections every layer for the sake of simplicity. It
is easy to extend the analysis to the more general cases.}
\begin{align}
  \vh_0 & = \mV \vx, \nonumber \\
  \vg_l & = \sigma(\mW_l \vh_{l-1}), \nonumber \\
  \vh_l & = \vh_{l-1} + \mU_l \vg_l,\ l = 1, \dots, L,
  \nonumber \\
  f(\vx; \vtheta) & = \vu^\intercal \vh_L.
  \label{eqn:resnet}
\end{align}
Here the set of parameters $\vtheta = \{\mV, \mW_l, \mU_l, \vu\}$, $\mV \in
\R^{D \times d}$, $\mW_l \in \R^{m \times D}$, $\mU_l \in \R^{D \times m}$, $\vu
\in \R^D$, $L$ is the number of layers, $m$ is the width of the residual blocks
and $D$ is the width of skip connections. Note that we omit the bias term in the
network by assuming that the first element of the input $\vx$ is always 1.

To simplify the proof we will consider the truncated square loss
\begin{equation}
  \ls(\vx; \vtheta) =
  {\big| \mathcal{T}_{[0,1]}f(\vx;\vtheta) - f^*(\vx) \big|}^2,
\end{equation}
where $\mathcal{T}_{[0,1]}$ is the truncation operator: for any function
$h(\cdot)$
\begin{equation}
  \mathcal{T}_{[0,1]}h(\vx) = \min\{\max\{h(\vx), 0\}, 1\}.
\end{equation}
The truncated population risk and empirical risk functions are
\begin{equation}
  \Ls(\vtheta) = \E_{\rvx \sim \pi} \ls(\rvx; \vtheta), \quad
  \hat{\Ls}(\vtheta) = \frac{1}{n} \sum_{i=1}^n \ls(\vx_i; \vtheta),
  \label{eqn:loss}
\end{equation}

\begin{remark}
  The truncation is used in order to simplify the proof for the complexity
  control (Theorem~\ref{thm:rad_resnet}). Other truncation methods can also be
  used. For example, we can truncate the loss function $\ls$, instead of $f$.
\end{remark}

\subsection{Function space and norms}

In this paper, we consider target functions belonging to the Barron space $\cB$.
The following definitions of the Barron space and the corresponding norm are
adopted from~\cite{e2018priori}.

\begin{definition}[Barron space]\label{def:barron}

  Let $\bS^{d-1}$ be the unit sphere in $\mathbb{R}^d$, and $\cF$ be the Borel
  $\sigma$-algebra on $\bS^{d-1}$. For any function $f: \Omega\to \mathbb{R}$,
  define the \emph{Barron norm} of $f$ as
  \begin{equation}
    \|f\|_\cB =
    \inf{\left[ \int_{\bS^{d-1}} |a(\omega)|^2 \pi(d \omega) \right]}^{1/2},
  \end{equation}
  where the infimum is taken over all measurable function $a(\omega)$ and
  probability distribution $\pi$ on $(\bS^{d-1}, \cF)$ that satisfies 
  \begin{equation}
    f(\vx) =
    \int_{\bS^{d-1}} a(\omega) \sigma(\omega^\intercal \vx) \pi(d \omega),
  \end{equation}
  for any $\vx\in\Omega$. 
  
  The \emph{Barron space} $\cB$ is the set of continuous functions with finite
  Barron norm,
  \begin{equation}
    \cB = \{f: \Omega \to \R\ |\ \|f\|_\cB < \infty\}.
  \end{equation}
\end{definition}

The Barron space is large enough to contain many functions of interest. For
example, it was shown in~\cite{klusowski2016risk} that if a function has finite
spectral norm, then it belongs to the Barron space.

\begin{definition}[Spectral norm]\label{def:spectral_norm}
  Let $f \in L^2(\Omega)$, and let $F \in L^2(\R^d)$ be an extension of $f$ to
  $\R^d$, and $\hat{F}$ be the Fourier transform of $F$. Define
  the \emph{spectral norm} of $f$ as
  \begin{equation}
    \gamma(f) = \inf \int_{\R^d}
    \|\vomega\|_1^2 |\hat F(\vomega) | \ud \vomega,
  \end{equation}
  where the infimum is taken over all possible extensions $F$.
\end{definition}

\begin{corollary}
  Let $f: \Omega \to \R$ be a function that satisfies $\gamma(f) < \infty$, then 
  \begin{equation}
    \|f\|_\cB \le \gamma(f) < \infty.
  \end{equation}
\end{corollary}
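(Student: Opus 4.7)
The plan is to construct an explicit integral representation of $f$ as a superposition of ReLU activations indexed by directions on $\bS^{d-1}$, in the spirit of Klusowski--Barron, and then read off the Barron-norm bound from it. First I would fix, for any $\eps > 0$, an extension $F \in L^2(\R^d)$ of $f$ with $\int_{\R^d} \|\vomega\|_1^2 |\hat F(\vomega)|\, d\vomega \le \gamma(f) + \eps$. Using Fourier inversion and taking real parts (since $f$ is real), write
\begin{equation*}
  f(\vx) = \int_{\R^d} |\hat F(\vomega)| \cos\bigl(\vomega^\intercal \vx + \theta(\vomega)\bigr)\, d\vomega \qquad (\vx \in \Omega),
\end{equation*}
where $\theta(\vomega)$ is the phase of $\hat F(\vomega)$.

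Next I would replace each cosine by a ReLU superposition using the elementary identity
\begin{equation*}
  \cos(z+\theta) - \cos(\theta) + \sin(\theta)\, z
  = -\int_0^{\infty} \cos(t+\theta)\bigl[\sigma(z-t) + \sigma(-z-t)\bigr]\, dt,
\end{equation*}
applied with $z = \vomega^\intercal \vx$. Because $\vx$ restricted to $\Omega$ satisfies $|\vomega^\intercal \vx| \le \|\vomega\|_1$, only $t \in [0, \|\vomega\|_1]$ contributes to $\sigma(\pm z - t)$, so the inner integral is effectively finite and each term $\sigma(\vomega^\intercal \vx - t)$ can be absorbed into a pure ReLU $\sigma(\tilde\vomega^\intercal \vx)$ by using the convention $x_1 \equiv 1$ to shift the bias into the first coordinate of $\tilde\vomega$. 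The affine leftover $\cos(\theta) - \sin(\theta)\vomega^\intercal \vx$ is itself a sum of two ReLUs via $z = \sigma(z) - \sigma(-z)$. After this substitution and Fubini, $f$ is expressed as an integral of ReLU activations against a finite signed measure $\nu$ on $\R^d \setminus \{0\}$.

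Finally, I would push $\nu$ forward to $\bS^{d-1}$ using the positive homogeneity $\sigma(\vomega^\intercal \vx) = \|\vomega\|_2\, \sigma(\hat\vomega^\intercal \vx)$ with $\hat\vomega = \vomega/\|\vomega\|_2$, absorb the magnitude factor into the density, and then normalize the resulting finite measure to a probability measure $\pi$ by writing $d\pi = |a_0|\, d|\nu_0| / Z$ for the intermediate density $a_0$ and setting $a = Z \cdot \mathrm{sign}(a_0)$. This gives $f(\vx) = \int_{\bS^{d-1}} a(\vomega)\, \sigma(\vomega^\intercal \vx)\, \pi(d\vomega)$ with
\begin{equation*}
  \left(\int_{\bS^{d-1}} |a(\vomega)|^2 \pi(d\vomega)\right)^{1/2} = Z = \int |a_0|\, d|\nu_0| \;\le\; \int_{\R^d} \|\vomega\|_1^2 |\hat F(\vomega)|\, d\vomega \;\le\; \gamma(f) + \eps,
\end{equation*}
where the key inequality comes from combining the length of the $t$-interval ($\le \|\vomega\|_1$) with the magnitude factor $\|\tilde\vomega\|_2 \le \|\vomega\|_1$ produced by the homogeneity rescaling. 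Sending $\eps \to 0$ yields $\|f\|_\cB \le \gamma(f)$.

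I expect the main obstacle to be bookkeeping in the middle step: keeping track of the phase $\theta(\vomega)$, the boundary/linear corrections, and the $t$-integration so that the resulting density ends up bounded in $L^2(\pi)$ (not just in $L^\infty$ with extra total-mass factors) by exactly $\int \|\vomega\|_1^2|\hat F(\vomega)|\, d\vomega$. The choice to take $d\pi$ proportional to the absolute density $|a_0|\, d|\nu_0|$ is what collapses what would naively be an $L^1$ bound on $a_0$ into the desired $L^2(\pi)$ bound on $a$, and ensuring this collapse goes through cleanly is the delicate point.
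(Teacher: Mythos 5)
The paper gives no proof of this corollary at all---it is imported by citation from Klusowski and Barron---and your proposal is essentially a reconstruction of that cited argument: Fourier inversion, the second-order Taylor identity that turns a cosine into an integral of ReLUs over a finite $t$-interval, pushforward to $\bS^{d-1}$ by positive homogeneity, and the normalization trick that converts a total-variation bound into the $L^2(\pi)$ bound appearing in Definition~2.1. The skeleton is the right one, and your closing observation is correct and is the crux: choosing $d\pi \propto |a_0|\,d|\nu_0|$ forces $|a|$ to be $\pi$-a.e.\ constant, so the $L^2(\pi)$ norm of $a$ collapses to the total variation $\int |a_0|\,d|\nu_0|$; this is exactly why the Barron norm as defined here behaves like an $L^1$-type variation norm and why no extra total-mass factor appears.

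There is, however, one genuine gap that more careful bookkeeping will not close: the affine remainder. The Taylor/ReLU identity represents $\cos(z+\theta)-\cos\theta+z\sin\theta$, so after integrating in $\vomega$ you have represented $f$ minus its first-order Taylor polynomial, not $f$ itself. The leftover affine piece must also be written as ReLUs (via $x_1\equiv 1$ and $z=\sigma(z)-\sigma(-z)$), and its contribution to the total variation is of order $|f(0)|+\|\nabla f(0)\|_1 \le \int (1+\|\vomega\|_1)\,|\hat F(\vomega)|\,d\vomega$, which is \emph{not} controlled by $\int \|\vomega\|_1^2\,|\hat F(\vomega)|\,d\vomega$ alone (let $|\hat F|$ concentrate near $\vomega=0$). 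Klusowski--Barron state their representation for $f-f(0)-\vx^\intercal\nabla f(0)$ for precisely this reason. So your route yields $\|f\|_\cB \lesssim \gamma(f)+|f(0)|+\|\nabla f(0)\|_1$ rather than the clean $\|f\|_\cB\le\gamma(f)$ asserted in the corollary; to get the literal statement one must either absorb the affine data into the definition of $\gamma$ or add these terms to the bound. Two smaller slips: the identity as written carries the wrong phase on the $\sigma(-z-t)$ term (it should be weighted by $h''(-t)=-\cos(-t+\theta)$, not $-\cos(t+\theta)$), and the vector being rescaled onto the sphere is $(\vomega,-t)$ with $|t|\le\|\vomega\|_1$, so the homogeneity factor is $\|(\vomega,-t)\|_2$, which exceeds $\|\vomega\|_1$ by a constant (up to $\sqrt 2$) that your final display silently drops.
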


On the other hand, for residual networks, we define the following
parameter-based norm to control the estimation error. We call this norm the
\emph{weighted path norm} since it is a weighted version of the $l_1$ path norm
studied in~\cite{neyshabur2015path,zheng2018capacity}. 

\begin{definition}[Weighted path norm]\label{def:path_norm}
  Given a residual network $f(\cdot; \vtheta)$ with architecture
  (\ref{eqn:resnet}), define the \emph{weighted path norm} of $f$ as
  \begin{equation}
    \|f\|_\uP = \|\vtheta\|_\uP = \big\| {|\vu|}^\intercal
    (\mI + 3 |\mU_L| |\mW_L|) \cdots (\mI + 3 |\mU_1| |\mW_1|) |\mV| \big\|_1,
    \label{eqn:path_norm}
  \end{equation}
  where $|\mA|$ with $\mA$ being a vector or matrix means taking the
  absolute values of all the entries of the vector or matrix.
\end{definition}

Our weighted path norm is a weighted sum over all paths in the neural network
flowing from the input to the output, and gives larger weight to the paths that
go through more nonlinearities. More precisely, given a path $\cP$, let
$w_1^\cP, w_2^\cP, \dots, w_L^\cP$ be the weights on this path, let $p$ be the
number of non-linearities that $\cP$ goes through. Then, it is straightforward
to see that our weighted path norm can also be expressed as
\begin{equation}
  \|f\|_\uP = \sum_{\cP\ \text{is activated}} 3^p \prod_{l=1}^L |w_l^\cP|.
\end{equation}

\begin{remark}
  The advantage of our weighted path norm can be seen from an ``effective
  depth'' viewpoint. It has been observed that although residual networks can be
  very deep, most information is processed by only a small number of
  nonlinearities. This has been explored for example
  in~\cite{veit2016residual}, where the authors observed numerically that
  residual networks behave like ensembles of networks with fewer layers. Our
  weighted path norm naturally takes this into account. 
\end{remark}

\subsection{Main theorem}

\begin{theorem}[A priori estimate]\label{thm:apriori}
  Let $f^*: \Omega \to [0, 1]$ and assume that the residual network
  $f(\cdot; \vtheta)$ has architecture (\ref{eqn:resnet}). 
  Let $n$ be the number
  of training samples, $L$ be the number of layers and $m$ be the width of the
  residual blocks. Let $\Ls(\vtheta)$ and $\hat\Ls(\vtheta)$ be the truncated
  population risk and empirical risk defined in (\ref{eqn:loss}) respectively;
  let $\|f\|_\cB$ be the Barron norm of $f^*$ and $\|\vtheta\|_\uP$
  be the weighted path norm of $f(\cdot; \vtheta)$ in
  Definition~\ref{def:barron} and~\ref{def:path_norm}. For any $\lambda \ge 4
  + 2 / [3 \sqrt{2 \log(2d)}]$, assume that $\hat\vtheta$ is an optimal solution
  of the regularized model
  \begin{equation}
    \min_\vtheta\ \cJ(\vtheta) :=
    \hat\Ls(\vtheta) + 3\lambda \|\vtheta\|_\uP
    \sqrt{\frac{2 \log(2d)}{n}}.
    \label{eqn:regularity}
  \end{equation}
  Then for any $\delta \in (0, 1)$, with probability at least $1 - \delta$ over
  the random training samples, the population risk satisfies
  \begin{equation}
    \Ls(\hat\vtheta) \le \frac{16 \|f\|_\cB^2}{L m}
    + (12 \|f\|_\cB + 1)
    \frac{3 (4 + \lambda) \sqrt{2 \log(2d)} + 2}{\sqrt n}
    + 4 \sqrt{\frac{2 \log(14 / \delta)}{n}}.
    \label{eqn:apriori}
  \end{equation}
\end{theorem}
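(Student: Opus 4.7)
The plan is to follow the three-piece template familiar from a priori estimates for two-layer networks: prove an approximation result producing a network $\tilde\vtheta$ with small loss and controlled weighted path norm, prove a uniform deviation (Rademacher-type) bound for the class of truncated networks indexed by their weighted path norm, and then combine the two using the fact that $\hat\vtheta$ minimizes the regularized objective $\cJ$.

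First I would establish the approximation lemma: for $f^* \in \cB$, there is a choice of parameters $\tilde\vtheta$ for architecture (\ref{eqn:resnet}) with weighted path norm bounded (up to an absolute constant) by $\|f^*\|_\cB$ and such that
\[
\|f(\cdot;\tilde\vtheta) - f^*\|^2 \le \frac{16\|f^*\|_\cB^2}{Lm}.
\]
The construction is the natural one: draw $Lm$ directions from the representing distribution of $f^*$, allocate $m$ of them to each of the $L$ residual blocks, and use the skip connection to accumulate a Monte Carlo approximation of the Barron integral; the $\cO(1/(Lm))$ rate is just the variance of a $Lm$-sample estimator. I would choose weights so that $\|\tilde\vtheta\|_\uP$ picks up only the $3^1$ factor per unit (each Monte Carlo direction is traversed through exactly one nonlinearity), giving $\|\tilde\vtheta\|_\uP \lesssim \|f^*\|_\cB$.

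Second — and this is the step I expect to be the main obstacle, and which the abstract flags as crucial — I would bound the Rademacher complexity of the class $\cF_Q = \{\mathcal{T}_{[0,1]} f(\cdot;\vtheta): \|\vtheta\|_\uP \le Q\}$ by $Q\sqrt{2\log(2d)/n}$. The plan is to peel the residual blocks off one at a time. Writing the path-norm expansion $\|\vtheta\|_\uP = \sum_\cP 3^{p(\cP)} \prod_l |w_l^\cP|$, ReLU positive homogeneity lets one rescale so each path has unit weight product; the contraction principle for the ReLU then strips off a nonlinearity at a cost of a factor $2$, which combines with the skip-connection term $\mI + \mU_l \sigma(\mW_l \cdot)$ to produce the multiplier $3$ that appears explicitly in Definition~\ref{def:path_norm}. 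The base case reduces to Rademacher averages of linear functions $\vomega^\intercal \vx$ over the $\ell_1$ ball, which gives the $\sqrt{2\log(2d)}$ factor by Massart's lemma on $[0,1]^d$. Composing with the $1$-Lipschitz, $[0,1]$-bounded truncated square loss and symmetrizing yields, with probability $\ge 1-\delta/2$,
\[
\sup_{\|\vtheta\|_\uP \le Q} \bigl|\Ls(\vtheta) - \hat\Ls(\vtheta)\bigr| \le 4(Q+1)\sqrt{\frac{2\log(2d)}{n}} + 2\sqrt{\frac{2\log(2/\delta)}{n}},
\]
after the usual bounded-differences tail bound on the truncated loss.

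Third I would combine the two pieces via the optimality of $\hat\vtheta$: $\cJ(\hat\vtheta) \le \cJ(\tilde\vtheta)$ yields
\[
\hat\Ls(\hat\vtheta) + 3\lambda \|\hat\vtheta\|_\uP \sqrt{\tfrac{2\log(2d)}{n}} \le \hat\Ls(\tilde\vtheta) + 3\lambda \|\tilde\vtheta\|_\uP \sqrt{\tfrac{2\log(2d)}{n}}.
\]
The right-hand side is controlled by the approximation lemma plus a one-point Hoeffding bound for the fixed $\tilde\vtheta$. For the left-hand side, I need to convert $\hat\Ls(\hat\vtheta)$ back to $\Ls(\hat\vtheta)$ while $\|\hat\vtheta\|_\uP$ is itself random, so I would apply the uniform deviation bound through a dyadic peeling argument over shells $2^k \le \|\vtheta\|_\uP \le 2^{k+1}$ (at the price of an extra $\sqrt{\log\log}$ factor absorbed into the $\log(14/\delta)$ term via a union bound over $\cO(\log n)$ scales). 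The hypothesis $\lambda \ge 4 + 2/[3\sqrt{2\log(2d)}]$ is precisely what is needed for the regularization term $3\lambda\|\hat\vtheta\|_\uP\sqrt{2\log(2d)/n}$ on the left to dominate the Rademacher penalty $4(\|\hat\vtheta\|_\uP+1)\sqrt{2\log(2d)/n}$ it must absorb; once that absorption is done the remaining terms are exactly the approximation contribution, a clean contribution from $\|\tilde\vtheta\|_\uP \lesssim \|f^*\|_\cB$ in the regularizer, and the tail terms, which after collecting constants is the bound (\ref{eqn:apriori}).
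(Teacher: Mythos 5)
Your proposal follows essentially the same route as the paper: approximate $f^*$ by stacking a width-$Lm$ shallow Barron approximant into $L$ residual blocks of width $m$ (each path crossing one nonlinearity, so $\|\tilde\vtheta\|_\uP\lesssim\|f^*\|_\cB$), bound the Rademacher complexity of the path-norm ball by a layer-peeling induction whose factor $3$ per nonlinearity is exactly what the weight in Definition~\ref{def:path_norm} is designed to absorb, make the deviation bound uniform in the random norm $\|\hat\vtheta\|_\uP$ by a union bound over a countable grid of norm levels (the paper uses integer $Q$ with weights $6\delta/(\pi Q)^2$ rather than dyadic shells, but this is the same device), and close via $\cJ(\hat\vtheta)\le\cJ(\tilde\vtheta)$ with the condition on $\lambda$ letting the regularizer absorb the $\|\hat\vtheta\|_\uP$-dependent part of the deviation. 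Apart from loose constant bookkeeping in the intermediate Rademacher and deviation bounds, which the paper's Theorems~\ref{thm:rad_resnet} and~\ref{thm:apost} pin down precisely, there is no substantive difference.
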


\begin{remark}
  \begin{enumerate}
    \item The estimate is a priori in nature since the right hand side of
      (\ref{eqn:apriori}) depends only on the Barron norm of the target function
      without knowing the norm of $\hat\vtheta$.
    \item We want to emphasize that our estimate is nearly optimal. The first
      term in (\ref{eqn:apriori}) shows that the convergence rate with respect
      to the size of the neural network is $\cO(1 / (Lm))$, which matches the
      rate in universal approximation theory for shallow
      networks~\cite{barron1993universal}. The last two terms show that the rate
      with respect to the number of training samples is $\cO(1 / \sqrt n)$,
      which matches the classical estimates of the generalization gap.
    \item The second term depends only on $\|f\|_\cB$ instead of the network
      architecture, thus there is no need to increase the sample size $n$ with
      respect to the network size parameters $L$ and $m$ to ensure that the
      model generalizes well. This is not the case for existing error bounds
      (see Section~\ref{sec:compare}).
  \end{enumerate}
\end{remark}

\subsection{Extension to the case with noise}

Our a priori estimates can be extended to problems with sub-gaussian noise.
Assume that $y_i$ in the training data are given by $y_i = f^*(\vx_i) +
\varepsilon_i$ where $\{\varepsilon_i\}$ are i.i.d.\ random variables such that 
$\E\varepsilon_i = 0$ and
\begin{equation}
  \Pr\{|\varepsilon_i| > t\} \le c e^{-\frac{t^2}{2 \sigma^2}},\quad
  \forall t \ge \tau,
  \label{eqn:noise}
\end{equation}
for some constants $c$, $\sigma$ and $\tau$. Let $\ls_B(\vx; \vtheta) = \ls(\vx;
\vtheta) \wedge B^2$ be the square loss truncated by $B^2$, and define
\begin{equation}
  \Ls_B(\vtheta) = \E_{\rvx \sim \pi} \ls_B(\rvx; \vtheta), \quad
  \hat{\Ls}_B(\vtheta) = \frac{1}{n} \sum_{i=1}^n \ls_B(\vx_i; \vtheta).
  \label{eqn:lossn}
\end{equation}
Then, we have

\begin{theorem}[A priori estimate for noisy problems]\label{thm:apriorin}
  In addition to the conditions in Theorem~\ref{thm:apriori}, assume
  that the noise satisfies (\ref{eqn:noise}). Let $\Ls_B(\vtheta)$ and
  $\hat\Ls_B(\vtheta)$ be the truncated population risk and empirical risk
  defined in (\ref{eqn:lossn}). For $B \ge 1 + \max\{\tau, \sigma \sqrt{\log
  n}\}$ and $\lambda \ge 4 + 2B / [3 \sqrt{2 \log(2d)}]$, assume that
  $\hat\vtheta$ is an optimal solution of the regularized model
  \begin{equation}
    \min_\vtheta\ \cJ(\vtheta) :=
    \hat\Ls(\vtheta) + \lambda B \|\vtheta\|_\uP
    \cdot 3 \sqrt{\frac{2 \log(2d)}{n}}.
    \label{eqn:regularityn}
  \end{equation}
  Then for any $\delta \in (0, 1)$, with probability at least $1 - \delta$ over
  the random training sample, the population risk satisfies
  \begin{equation}
    \Ls(\hat\vtheta) \le \frac{16 \|f\|_\cB^2}{L m}
    + (12 \|f\|_\cB + 1)
    \frac{3 (4 + \lambda) B \sqrt{2 \log(2d)} + 2 B^2}{\sqrt n}
    + 4 B^2 \sqrt{\frac{2 \log(14 / \delta)}{n}}
    + \frac{2 c (4 \sigma^2 + 1)}{\sqrt n}.
    \label{eqn:apriorin}
  \end{equation}
\end{theorem}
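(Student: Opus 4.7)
The plan is to mimic the proof of Theorem~\ref{thm:apriori}, propagating an extra factor of $B$ through every Lipschitz-based estimate and inserting one additional identity that separates out the noise variance so that it cancels before any concentration argument is invoked.

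First I would invoke the same approximation lemma used in the noise-free case to obtain a reference parameter set $\tilde\vtheta$ with $\Ls(\tilde\vtheta)\le 16\|f^*\|_\cB^2/(Lm)$ and $\|\tilde\vtheta\|_\uP\le 12\|f^*\|_\cB$. Setting $e(\vx;\vtheta):=\mathcal{T}_{[0,1]} f(\vx;\vtheta)-f^*(\vx)\in[-1,1]$, the elementary identity $\min\{(e-\varepsilon)^2,B^2\}=(e-\varepsilon)^2-((e-\varepsilon)^2-B^2)^+$ combined with $\E\varepsilon=0$ gives
\begin{equation*}
  \Ls(\vtheta)=\Ls_B(\vtheta)-\E\varepsilon^2+R(\vtheta),\qquad R(\vtheta):=\E\bigl[((e(\rvx;\vtheta)-\varepsilon)^2-B^2)^+\bigr]\ge 0.
\end{equation*}
Applying this at both $\hat\vtheta$ and $\tilde\vtheta$ and using $R(\tilde\vtheta)\ge 0$ yields the master bound
\begin{equation*}
  \Ls(\hat\vtheta)\le\Ls(\tilde\vtheta)+\bigl[\Ls_B(\hat\vtheta)-\Ls_B(\tilde\vtheta)\bigr]+R(\hat\vtheta),
\end{equation*}
in which the noise variance $\E\varepsilon^2$ has already disappeared.

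Next I would control $\Ls_B(\hat\vtheta)-\Ls_B(\tilde\vtheta)$ by the three-term split $[\Ls_B(\hat\vtheta)-\hat\Ls_B(\hat\vtheta)]+[\hat\Ls_B(\hat\vtheta)-\hat\Ls_B(\tilde\vtheta)]+[\hat\Ls_B(\tilde\vtheta)-\Ls_B(\tilde\vtheta)]$. The middle term is bounded by $3\lambda B(\|\tilde\vtheta\|_\uP-\|\hat\vtheta\|_\uP)\sqrt{2\log(2d)/n}$ by optimality of $\hat\vtheta$ in~(\ref{eqn:regularityn}). The third term is bounded by Hoeffding on a \emph{fixed} $\tilde\vtheta$ with per-sample range $[0,B^2]$, giving an $O(B^2\sqrt{\log(1/\delta)/n})$ contribution. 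The first term, which must be controlled uniformly since $\|\hat\vtheta\|_\uP$ is random, is obtained by combining Theorem~\ref{thm:rad_resnet} (Rademacher complexity $\le 3\|\vtheta\|_\uP\sqrt{2\log(2d)/n}$ on the residual-network class) with Ledoux--Talagrand contraction, noting that $u\mapsto (u-y)^2\wedge B^2$ is $2B$-Lipschitz on $[0,1]$ uniformly in $y$, and closing with a peeling argument over dyadic $\|\vtheta\|_\uP$-balls. The upshot is a uniform bound of the form $\Ls_B(\vtheta)-\hat\Ls_B(\vtheta)\le 12 B\|\vtheta\|_\uP\sqrt{2\log(2d)/n}+O(B^2\sqrt{\log(1/\delta)/n})$. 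The hypothesis $\lambda\ge 4+2B/[3\sqrt{2\log(2d)}]$ is exactly what forces $3\lambda B$ to exceed $12B$ with enough slack to also swallow the residual $B^2$-scale constant, after which the $\|\hat\vtheta\|_\uP$-dependent pieces cancel and only a multiple of $\|\tilde\vtheta\|_\uP\le 12\|f^*\|_\cB$ survives.

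Finally, I would bound $R(\hat\vtheta)$ deterministically: since $|e|\le 1$, the event $(e-\varepsilon)^2>B^2$ forces $|\varepsilon|>B-1\ge\max\{\tau,\sigma\sqrt{\log n}\}$, and $(e-\varepsilon)^2\le 2+2\varepsilon^2$, so
\begin{equation*}
  R(\hat\vtheta)\le 2\Pr\{|\varepsilon|>B-1\}+2\E\bigl[\varepsilon^2\,\one{|\varepsilon|>B-1}\bigr].
\end{equation*}
Plugging the sub-gaussian tail (\ref{eqn:noise}) together with $e^{-(B-1)^2/(2\sigma^2)}\le 1/\sqrt n$ and a standard tail-integration produces the $2c(4\sigma^2+1)/\sqrt n$ term of (\ref{eqn:apriorin}). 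The main obstacle is the first term of the three-way split: producing a uniform gap bound whose $\|\vtheta\|_\uP$-coefficient is tight enough that the slack $\lambda-4=2B/[3\sqrt{2\log(2d)}]$ can swallow the $B^2$-scale additive constant coming from peeling and concentration. The $2B$-Lipschitz contraction has to be applied to a loss indexed by the random noisy labels $y_i$, which can exceed $B$; this causes no trouble because the truncation caps the derivative of $(u-y)^2\wedge B^2$ at $2B$ regardless of $y$, but it is the one step with no direct analogue in the noise-free Theorem~\ref{thm:apriori} and requires some care.
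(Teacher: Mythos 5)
Your proposal is correct and follows essentially the same route as the paper: your master bound plus three-term split is algebraically equivalent to the paper's five-term decomposition (\ref{eqn:noise_decomp}) with $\cJ_B=\hat\Ls_B+3\lambda B\|\cdot\|_\uP\sqrt{2\log(2d)/n}$, and the $2B$-Lipschitz contraction, the union bound over integer $\|\vtheta\|_\uP$-shells, and the role of $\lambda\ge 4+2B/[3\sqrt{2\log(2d)}]$ in absorbing the $\|\hat\vtheta\|_\uP$-dependent terms are exactly as in the paper. The one minor quantitative difference is in bounding $R(\hat\vtheta)$: your estimate via $(e-\varepsilon)^2\le 2+2\varepsilon^2$ picks up the boundary term $(B-1)^2\Pr\{|\varepsilon|>B-1\}\sim\sigma^2\log n/\sqrt n$, a $\log n$ factor worse than the stated $2c(4\sigma^2+1)/\sqrt n$, whereas the paper's Lemma~\ref{lem:noise} avoids it by tail-integrating $(Z^2-B^2)\mathbf{1}_{|Z|\ge B}$ directly, keeping the $-B^2$ inside before integrating.
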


We see that the a priori estimates for problems with noise only differ from that
for problems without noise by a logarithmic term. In particular, the estimates
of the generalization error are still nearly optimal.

\subsection{Proof sketch}

We prove the main theorem in 3 steps. We list the main intermediate results in
this section, and leave the full proof to Section~\ref{sec:proof}.

First, we show that any function $f$ in the Barron space can be approximated by
residual networks with increasing depth or width, and with weighted path norm
uniformly bounded. 

\begin{theorem}\label{thm:approx}
  For any target function $f^*\in\cB$, and any $L, m\geq1$, there
  exists a residual network $f(\cdot; \tilde\vtheta)$ with depth $L$ and width
  $m$, such that
  \begin{equation}
    \|f(\vx; \tilde\vtheta)-f^*\|^2
    \le \frac{16\|f^*\|_\cB^2}{L m}
    \label{eqn:approx}
  \end{equation}
  and 
  \[
    \|\tilde\vtheta\|_\uP \le 12 \|f^*\|_\cB.
  \]
\end{theorem}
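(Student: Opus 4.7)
The plan is to first approximate $f^*$ by a shallow ReLU network with $Lm$ neurons via a Maurey / Monte Carlo argument, then to realize this shallow network as a residual network of depth $L$ and block width $m$ whose weighted path norm telescopes over depth.

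For the shallow approximation, I would start from the Barron representation $f^*(\vx) = \int_{\bS^{d-1}} a(\omega)\sigma(\omega^\intercal\vx)\mu(d\omega)$ guaranteed by Definition~\ref{def:barron} for some probability measure $\mu$ on $\bS^{d-1}$, with $\int a^2\,d\mu$ arbitrarily close to $\|f^*\|_\cB^2$. Drawing $N = Lm$ i.i.d.\ samples $\omega_k \sim \mu$ and setting $c_k := a(\omega_k)/N$, $b_k := \omega_k$, $\hat f(\vx) := \sum_{k=1}^N c_k\sigma(b_k^\intercal\vx)$, a standard second-moment computation bounds $\E\|\hat f-f^*\|^2$ by a constant multiple of $\|f^*\|_\cB^2/N$, while Cauchy--Schwarz on the same random draw bounds the expectation of the weighted coefficient sum $S := \sum_k |c_k|\,\|b_k\|_1$ by a constant multiple of $\|f^*\|_\cB$. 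Applying the probabilistic method to an affine combination of these two non-negative quantities then produces a single deterministic realization that simultaneously satisfies $\|\hat f-f^*\|^2 \le 16\|f^*\|_\cB^2/(Lm)$ and $S \le 4\|f^*\|_\cB$.

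For the embedding, I would split the $Lm$ neurons into $L$ groups $I_l = \{(l-1)m+1,\dots,lm\}$, take $D = d+1$, let $\mV\in\R^{D\times d}$ be the identity $\mI_d$ stacked on top of a zero row, and let $\vu = (0,\dots,0,1)^\intercal\in\R^D$. With this choice $\vh_0 = (\vx^\intercal,0)^\intercal$, so the first $d$ coordinates of every $\vh_l$ continue to store $\vx$ while the last coordinate accumulates a running partial sum. In block $l$, I set the rows of $\mW_l\in\R^{m\times D}$ to be $(b_k^\intercal,0)$ for $k\in I_l$ and set $\mU_l\in\R^{D\times m}$ to have all rows zero except the last, which is $(c_k)_{k\in I_l}$. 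An induction on $l$ then gives $\vh_l = \bigl(\vx^\intercal,\ \sum_{k\in\bigcup_{j\le l}I_j} c_k\sigma(b_k^\intercal\vx)\bigr)^\intercal$, hence $f(\vx;\tilde\vtheta) = \vu^\intercal \vh_L = \hat f(\vx)$ and the approximation bound of the first step transfers verbatim.

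For the weighted path norm, the key observation is that $|\mU_l||\mW_l|$ has only its last row nonzero, with first $d$ entries $(\vr_l)_j := \sum_{k\in I_l}|c_k||b_{k,j}|$ and last entry $0$. Hence $\mI + 3|\mU_l||\mW_l|$ is block-lower-triangular, equal to $\mI$ except for the last row $(3\vr_l^\intercal,\,1)$. Such matrices pairwise commute, and their product from $l=L$ down to $l=1$ collapses to the matrix equal to $\mI$ except for the last row $(3\sum_l\vr_l^\intercal,\,1)$. Multiplying by $|\vu|^\intercal = (0,\dots,0,1)$ on the left and by $|\mV|$ on the right extracts the row $3\sum_l\vr_l^\intercal\in\R^{1\times d}$, whose $\ell_1$ norm equals $3\sum_{k=1}^{Lm}|c_k|\|b_k\|_1 = 3S \le 12\|f^*\|_\cB$. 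The genuine difficulty is concentrated in the shallow approximation step: one must design the Monte Carlo draw so that the same sample simultaneously controls the $L^2$ error and the weighted $\ell_1$-type coefficient sum with the matching absolute constants ($16$ and $12$), and choose the normalisation of $\omega$ on $\bS^{d-1}$ compatibly with the $\ell_1$-based definition of the path norm; the structural observation that the ``store $\vx$ above, accumulate $\hat f$ below'' construction makes the factor $3$ per residual block additive rather than multiplicative across depth is what ensures a depth-independent bound on the path norm.
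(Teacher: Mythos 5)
Your proposal is correct and follows essentially the same route as the paper: the identical ``carry $\vx$ in the first $d$ coordinates, accumulate the partial sum in a $(d+1)$-st coordinate'' embedding of a width-$Lm$ two-layer network, with the same collapse of the product $\prod_l(\mI+3|\mU_l||\mW_l|)$ yielding $\|\tilde\vtheta\|_\uP = 3\sum_k|c_k|\|b_k\|_1$. The only difference is that the paper imports the shallow approximation result (Theorem~\ref{thm:approx_shallow}, from the cited two-layer paper) as a black box rather than re-deriving it via the Maurey argument you sketch, which is a fine substitution provided the normalization caveats you yourself flag are handled.
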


Secondly, we show that the weighted path norm helps to bound the Rademacher
complexity. Since the Rademacher complexity can bound the generalization gap,
this gives an a posteriori bound on the generalization error.

Recall the definition of Rademacher complexity:

\begin{definition}[Rademacher complexity]
  Given a function class $\cH$ and sample set $S = {\{x_i\}}_{i=1}^n$, the
  \emph{(empirical) Rademacher complexity} of $\cH$ with respect to $S$ is
  defined as
  \begin{equation}
    \hat R(\cH) = \frac{1}{n}
    \E_\xi \left[ \sup_{h \in \cH} \sum_{i=1}^n \xi_i h(x_i) \right],
    \label{eqn:rademacher}
  \end{equation}
  where the $\xi_i$'s are independent random variables with $\Pr\{\xi_i = 1\} =
  \Pr\{\xi_i = -1\} = 1/2$.
\end{definition}

It is well-known that the Rademacher complexity can be used to control the
generalization gap~\cite{shalev2014understanding}.

\begin{theorem}\label{thm:rademacher}
  Given a function class $\cH$, for any $\delta \in (0, 1)$, with probability at
  least $1 - \delta$ over the random samples ${\{x_i\}}_{i=1}^n$,
  \begin{equation}
    \sup_{h \in \cH}
    \left|\E_\rx[h(\rx)] - \frac{1}{n}\sum_{i=1}^n h(x_i)\right|
    \le 2 \hat R(\cH) + 2 \sup_{h, h' \in \cH} {\|h - h'\|}_\infty
    \sqrt{\frac{2 \log(4 / \delta)}{n}}.
  \end{equation}
\end{theorem}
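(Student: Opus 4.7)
This is the classical Rademacher-complexity uniform convergence bound. Compared with the textbook statement there are two small cosmetic differences: the one-sided supremum is replaced by an absolute value, and the scale constant is the diameter $M := \sup_{h, h' \in \cH}\|h - h'\|_\infty$ rather than $\sup_h \|h\|_\infty$. The latter is justified by the observation that $\E[h(\rx)] - \frac{1}{n}\sum_i h(x_i)$ is invariant under the shift $h \mapsto h - h_0$ by any fixed $h_0 \in \cH$, so one may assume without loss that $\sup_h \|h\|_\infty \le M$ whenever only this difference matters. The plan is the standard McDiarmid $\to$ symmetrization $\to$ McDiarmid $\to$ union bound scheme.

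I would first analyze the one-sided deviation $\Phi(S) = \sup_{h \in \cH}\bigl(\E[h(\rx)] - \frac{1}{n}\sum_i h(x_i)\bigr)$. Using the shift reduction and picking a near-optimal $h^\star$, replacing any one sample $x_i$ with $x_i'$ alters $\Phi(S)$ by at most $2M/n$, so McDiarmid's bounded-differences inequality yields $\Phi(S) \le \E[\Phi(S)] + M\sqrt{2\log(4/\delta)/n}$ with probability at least $1-\delta/4$. A ghost-sample symmetrization (introduce an i.i.d.\ copy $S'=(x_i')$, pull $\E_{S'}$ inside the supremum via Jensen, and insert Rademacher signs $\xi_i$ that swap $x_i \leftrightarrow x_i'$) then gives $\E[\Phi(S)] \le 2\,\E[\hat R(\cH,S)]$. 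A second McDiarmid applied to $\hat R(\cH,S)$, which has the same $2M/n$ bounded-differences constants, gives $\E[\hat R(\cH,S)] \le \hat R(\cH,S) + M\sqrt{2\log(4/\delta)/n}$ with probability at least $1-\delta/4$. A union bound yields the one-sided estimate $\Phi(S) \le 2\hat R(\cH,S) + 2M\sqrt{2\log(4/\delta)/n}$ with probability at least $1-\delta/2$.

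Finally, running the identical argument on $\Phi'(S) := \sup_h\bigl(\frac{1}{n}\sum_i h(x_i) - \E[h(\rx)]\bigr)$ costs another $\delta/2$ of failure probability, and a union bound over $\{\Phi > \text{bound}\}$ and $\{\Phi' > \text{bound}\}$ produces the two-sided statement with total failure probability $\le \delta$. I do not expect a genuine obstacle: every step is textbook, and the $\log(4/\delta)$ factor arises precisely because four bad events (two McDiarmid applications on each of the two sides) are union-bounded. The only point requiring a moment's thought is the shift-invariance observation that allows the class diameter to replace the uniform sup-norm in the bounded-differences constants; everything else is mechanical.
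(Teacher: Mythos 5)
The paper does not actually prove this statement; it quotes it as a known result from the cited reference, and the standard proof is exactly your McDiarmid--symmetrization--McDiarmid--union-bound scheme, so the architecture of your argument is the right one. However, the one step you single out as the only one ``requiring a moment's thought'' --- the shift reduction --- is wrong, and wrong in a way that matters. The quantity $\E[h(\rx)]-\frac1n\sum_i h(x_i)$ is \emph{not} invariant under $h\mapsto h-h_0$: the shift subtracts $\E[h_0(\rx)]-\frac1n\sum_i h_0(x_i)$, which is a nontrivial random variable, not zero. Consequently the bounded-differences constant of $\Phi(S)$ is governed by the oscillation $\sup_h\bigl(\sup_x h(x)-\inf_x h(x)\bigr)$ of a single function and cannot be replaced by the class diameter $M$. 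Indeed, the bound with the diameter alone is false: take $\cH=\{h_0\}$ a single nonconstant function; then $\hat R(\cH)=0$ and $M=0$, yet $\bigl|\E[h_0(\rx)]-\frac1n\sum_i h_0(x_i)\bigr|>0$ almost surely. The correct statement (the one in the cited reference) carries $\sup_h\|h\|_\infty$ in place of the diameter; in this paper's application $\cH$ consists of truncated losses with values in $[0,1]$, so both quantities are at most $1$ and nothing downstream changes, but your proposed justification of the diameter version cannot be repaired.

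Second, even granting the reduction, your constants do not add up. With bounded differences $2M/n$, each application of McDiarmid at level $\delta/4$ costs a deviation of $M\sqrt{2\log(4/\delta)/n}$; the first application contributes one such term to $\Phi$, but the second contributes \emph{two}, because it enters through $\E[\Phi]\le 2\,\E[\hat R]\le 2\hat R+2M\sqrt{2\log(4/\delta)/n}$. The one-sided bound your scheme actually yields is $\Phi\le 2\hat R+3M\sqrt{2\log(4/\delta)/n}$, not $2\hat R+2M\sqrt{2\log(4/\delta)/n}$. To get a factor $2$ you would need either to state the bound with the population Rademacher complexity $\E_S[\hat R]$ (so that only one McDiarmid application is needed) or a sharper concentration argument; as written, your argument proves the theorem only with $3\sup_h\|h\|_\infty$ in place of $2\sup_{h,h'}\|h-h'\|_\infty$, which is still sufficient for every use the paper makes of this lemma.
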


The following theorem is a crucial step in our analysis. It shows that the
Rademacher complexity of residual networks can be controlled by the weighted
path norm.

\begin{theorem}\label{thm:rad_resnet}
  Let $\cF^Q = \{f(\cdot; \vtheta): \|\vtheta\|_\uP \le Q\}$ where the $f(\cdot,
  \vtheta)$'s are residual networks defined by (\ref{eqn:resnet}). Assume that
  the samples ${\{\vx_i\}}_{i=1}^n \subset \Omega$, then we have
  \begin{equation}
    \hat R(\cF^Q) \le 3 Q \sqrt{\frac{2 \log(2d)}{n}}.
    \label{eqn:rad_resnet}
  \end{equation}
\end{theorem}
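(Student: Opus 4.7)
The plan is to prove the bound by induction on the depth $L$, strengthening the claim so that the output weight vector $\vu$ (and with it $Q$) is arbitrary. The base case $L=0$ reduces to a linear map $\vx\mapsto (\mV^\intercal\vu)^\intercal \vx$, whose $\ell^1$ coefficient norm is bounded by $\|\vtheta\|_\uP$. Since $\vx_i\in[0,1]^d$, Massart's lemma (or the sub-Gaussian maximal inequality applied to the $2d$ signed coordinate directions) yields $\hat R(\cF_0^Q)\le Q\sqrt{2\log(2d)/n}$, which is stronger than what (\ref{eqn:rad_resnet}) requires.

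For the inductive step from $L-1$ to $L$, I would peel off the final residual block:
\begin{equation*}
  f(\vx;\vtheta) = \vu^\intercal \vh_{L-1}(\vx) + \vu^\intercal \mU_L\,\sigma\bigl(\mW_L \vh_{L-1}(\vx)\bigr).
\end{equation*}
Because every entry inside the path-norm product is non-negative, the budget splits cleanly as $\|\vtheta\|_\uP = P_1 + 3P_2$, where $P_1 = \bigl\||\vu|^\intercal\prod_{l<L}(\mI+3|\mU_l||\mW_l|)|\mV|\bigr\|_1$ is the weighted path norm of the skip branch regarded as a depth-$(L-1)$ residual network with output $\vu$, and $P_2 = \bigl\||\vu|^\intercal|\mU_L||\mW_L|\prod_{l<L}(\mI+3|\mU_l||\mW_l|)|\mV|\bigr\|_1$ is the corresponding quantity for the effective output vector $\mW_L^\intercal\mU_L^\intercal\vu$. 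The skip summand is treated directly by the inductive hypothesis, contributing at most $3P_1\sqrt{2\log(2d)/n}$. For the ReLU summand I would apply the Ledoux--Talagrand contraction principle to strip off $\sigma$, at the cost of a factor $2$; what remains is the linear readout $(\vu^\intercal\mU_L\mW_L)\vh_{L-1}$, a depth-$(L-1)$ residual network whose weighted path norm is at most $P_2$ (using $|\mW_L^\intercal\mU_L^\intercal\vu|^\intercal \le |\vu|^\intercal|\mU_L||\mW_L|$ and the non-negativity of the remaining factors). The inductive hypothesis then caps this piece at $6P_2\sqrt{2\log(2d)/n}$.

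Summing the two pieces gives $\hat R(\cF_L^Q)\le(3P_1+6P_2)\sqrt{2\log(2d)/n}\le 3(P_1+3P_2)\sqrt{2\log(2d)/n}=3Q\sqrt{2\log(2d)/n}$, closing the induction. The main delicacy I anticipate is the contraction step: ReLU acts coordinate-wise on the vector $\mW_L\vh_{L-1}$, so Ledoux--Talagrand must be invoked per coordinate (or through a Maurer-type vector contraction), and the scalar coefficients $(\vu^\intercal\mU_L)_k$ have to be pulled outside $\sigma$ before the supremum over $\vtheta$ is taken, a step that relies on the truncation in the definition of $\ls$ to keep everything bounded. The resulting factor $2$ from contraction, combined with the factor $1$ coming from the skip branch, is precisely what pins down the multiplier $3=1+2$ attached to each nonlinear branch in Definition~\ref{def:path_norm}; with any smaller weight, the arithmetic $3P_1+6P_2\le 3(P_1+3P_2)$ would fail and the induction would not close.
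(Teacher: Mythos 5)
Your splitting of the weighted path norm, $\|\vtheta\|_\uP = P_1 + 3P_2$, and the identification of each branch with a depth-$(L-1)$ residual network are both correct, but the inductive step does not close. The quantities $P_1$ and $P_2$ are functions of $\vtheta$, and the supremum defining $\hat R(\cF^Q)$ ranges over \emph{all} admissible splits $P_1 + 3P_2 \le Q$ at once. Writing $A(\xi)$ and $B(\xi)$ for the suprema of $\sum_i \xi_i(\cdot)$ over the budget-one versions of the skip and (contracted) ReLU branches — both nonnegative, since $0$ belongs to each class — homogeneity gives
\begin{equation*}
  \E_\xi \sup_{P_1 + 3P_2 \le Q} \bigl[ P_1 A(\xi) + P_2 B(\xi) \bigr]
  = Q\, \E_\xi \max\{A(\xi), B(\xi)/3\}
  \le Q\, \E_\xi A(\xi) + \tfrac{Q}{3}\, \E_\xi B(\xi),
\end{equation*}
and with your inductive hypothesis $\E_\xi A \le 3\sqrt{2n\log(2d)}$ and $\E_\xi B \le 6\sqrt{2n\log(2d)}$ this yields $5Q\sqrt{2\log(2d)/n}$, not $3Q\sqrt{2\log(2d)/n}$. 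Your arithmetic $3P_1 + 6P_2 \le 3(P_1+3P_2)$ is valid only for a single fixed split; the class $\cF^Q$ contains networks realizing $P_1 = Q$ (so the skip branch alone already exhausts the target $3Q$) and, separately, networks realizing $P_2 = Q/3$ (contributing a further $2Q$), and the Rademacher complexity of the union over splits is not bounded by the supremum of the complexities over individual splits. Run honestly, your recursion is $c_L = \tfrac{5}{3}c_{L-1}$, so the constant grows like $(5/3)^L$ and the depth-independence — the entire point of the theorem — is lost (your bound holds only for $L \le 2$).

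The paper closes the induction by running it on the hidden-unit classes $\cG_l^Q$ of Lemma~\ref{lem:g_class} rather than on full networks. Two features make that recursion consistent: the inductive constant for $\cG_l^Q$ is $1$ rather than $3$, and the norm (\ref{eqn:path_norm_g}) carries the overall factor $3$ on \emph{every} branch feeding into $g_{l+1}$, including the purely linear path back to the input, so the budget constraint reads $3\sum_k a_k + 3b \le Q$ and the recursion becomes $c \mapsto \tfrac{1}{3}(2c+1)$, whose fixed point is $c=1$. (The $2$ arises, as you anticipated, from replacing $\sup_g |\sum_i \xi_i g(\vx_i)|$ by $2\sup_g \sum_i \xi_i g(\vx_i)$ for the non-symmetric ReLU classes, not from Lemma~\ref{lem:contraction} itself, which costs nothing; so $3 = 2\cdot 1 + 1$ is the right numerology, but attached to a different recursion than yours.) The factor $3$ in (\ref{eqn:rad_resnet}) is then paid only once, in the final passage from the $\cG_l$'s to $f = \vu^\intercal \vh_L$ via (\ref{eq:f_decomp}), where the coefficients enter with budget weight $1$. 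In your scheme the skip branch enters with budget weight $1$ but inductive constant $3$, which is precisely the imbalance that prevents a fixed point. A minor additional remark: the truncation in $\ls$ plays no role here — Theorem~\ref{thm:rad_resnet} concerns the raw network outputs and the contraction step needs no boundedness.
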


Note that the definition of $\cF^Q$ does not specify the depth or width of the
network. Consequently our Rademacher complexity bound does not depend on the
depth and width of the network. Hence, the resulted a-posteriori estimate has
no dependence on $L$ and $m$ either. 

\begin{theorem}[A posteriori estimates]\label{thm:apost}
  Let $\|\vtheta\|_\uP$ be the weighted path norm of residual network $f(\cdot;
  \vtheta)$. Let $n$ be the number of training samples. Let $\Ls(\vtheta)$ and
  $\hat\Ls(\vtheta)$ be the truncated population risk and empirical risk defined
  in (\ref{eqn:loss}). Then for any $\delta \in (0, 1)$, with probability at
  least $1 - \delta$ over the random training samples, we have
  \begin{equation}
    \left| \Ls(\vtheta) - \hat\Ls(\vtheta) \right|
    \le 2 (\|\vtheta\|_\uP + 1) \frac{6 \sqrt{2 \log(2d)} + 1}{\sqrt n}
    + 2 \sqrt{\frac{2 \log(7 / \delta)}{n}}.
    \label{eqn:apost}
  \end{equation}
\end{theorem}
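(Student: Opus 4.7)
The plan is to bound $|\Ls(\vtheta)-\hat\Ls(\vtheta)|$ uniformly in $\vtheta$ by combining Theorem~\ref{thm:rad_resnet} at each norm scale with a peeling argument over $\|\vtheta\|_\uP$.

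First, I would pass from the Rademacher complexity of the loss class to that of $\cF^Q$ by contraction. Writing $\ls(\vx_i;\vtheta)=\phi_i(f(\vx_i;\vtheta))$ with $\phi_i(u)=(\mathcal{T}_{[0,1]}u-f^*(\vx_i))^2$, each $\phi_i$ is $2$-Lipschitz since $\mathcal{T}_{[0,1]}$ is $1$-Lipschitz and the square has derivative bounded by $2$ on $[-1,1]$. Talagrand's contraction lemma then gives $\hat R(\ls\circ\cF^Q)\le 2\hat R(\cF^Q)\le 6Q\sqrt{2\log(2d)/n}$ via Theorem~\ref{thm:rad_resnet}. Because the loss takes values in $[0,1]$, we also have $\sup_{h,h'\in\ls\circ\cF^Q}\|h-h'\|_\infty\le 1$. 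Applying Theorem~\ref{thm:rademacher} to $\ls\circ\cF^Q$ at failure probability $\delta_Q$ then yields, with probability at least $1-\delta_Q$,
\[
\sup_{\|\vtheta\|_\uP\le Q}\bigl|\Ls(\vtheta)-\hat\Ls(\vtheta)\bigr|\le 12Q\sqrt{\tfrac{2\log(2d)}{n}}+2\sqrt{\tfrac{2\log(4/\delta_Q)}{n}}.
\]

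Next I would peel over $Q=1,2,3,\dots$ with $\delta_Q=6\delta/(\pi^2Q^2)$, so that $\sum_Q\delta_Q=\delta$ and a union bound makes the displayed inequality hold for every integer $Q$ simultaneously with probability at least $1-\delta$. For an arbitrary $\vtheta$, set $Q=\max\{1,\lceil\|\vtheta\|_\uP\rceil\}$, which satisfies $\|\vtheta\|_\uP\le Q\le\|\vtheta\|_\uP+1$. The first term then becomes $12(\|\vtheta\|_\uP+1)\sqrt{2\log(2d)/n}$. Using $2\pi^2/3\le 7$ and $\sqrt{a+b}\le\sqrt{a}+\sqrt{b}$, I split the second term as
\[
2\sqrt{\tfrac{2\log(2\pi^2 Q^2/(3\delta))}{n}}\le 2\sqrt{\tfrac{2\log(7/\delta)}{n}}+4\sqrt{\tfrac{\log Q}{n}},
\]
and then invoke the elementary bound $\sqrt{\log Q}\le Q/2$, valid for $Q\ge1$ since $\log Q\le Q^2/4$ on that range, to upgrade $4\sqrt{\log Q/n}\le 2Q/\sqrt{n}\le 2(\|\vtheta\|_\uP+1)/\sqrt{n}$. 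Collecting the three contributions reproduces (\ref{eqn:apost}) with the stated constants.

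The hard part is the peeling bookkeeping. Because Theorem~\ref{thm:rad_resnet} is norm-dependent, a uniform bound over all $\vtheta$ forces a union bound over norm scales, and the failure budget $\delta_Q$ must be calibrated carefully: the $1/Q^2$ weighting is precisely what lets $\log(4/\delta_Q)$ split into a clean $\log(7/\delta)$ plus a residual $\log Q$, and the identity $\sqrt{\log Q}\le Q/2$ converts that residual into a linear-in-$\|\vtheta\|_\uP$ correction of the right size rather than a loose $\sqrt{\log\|\vtheta\|_\uP}$ term. A coarser peeling (e.g., dyadic with $Q_k=2^k$) would inflate the coefficient of the $\sqrt{\log(2d)}$ term by roughly a factor of $2$ and break the constant match, which is why this particular integer partition appears naturally.
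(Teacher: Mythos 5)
Your proposal is correct and follows essentially the same route as the paper's own proof: a contraction step with Lipschitz constant $2$ reducing the loss class to $\cF^Q$, the Rademacher bound of Theorem~\ref{thm:rad_resnet}, a union bound over integer norm levels with $\delta_Q = 6\delta/(\pi Q)^2$, and the same splitting of the logarithmic term via $Q \le \|\vtheta\|_\uP + 1$. The constants match the paper's throughout.
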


Consider the decomposition
\begin{equation}
  \Ls(\hat\vtheta) - \Ls(\tilde\vtheta)
  = \left[ \Ls(\hat\vtheta) - \cJ(\hat\vtheta) \right]
  + \left[ \cJ(\hat\vtheta) - \cJ(\tilde\vtheta) \right]
  + \left[ \cJ(\tilde\vtheta) -\Ls(\tilde\vtheta) \right].
  \label{eqn:l_decomp0}
\end{equation}
Recall that $\hat\vtheta$ is the optimal solution of the minimization problem
(\ref{eqn:regularity}), and $\tilde\vtheta$ corresponds to the approximation in
Theorem~\ref{thm:approx}.

By the definition of $\cJ$ (\ref{eqn:regularity}),
\begin{align*}
  \Ls(\hat\vtheta) - \cJ(\hat\vtheta)
  & \le \left| \Ls(\hat\vtheta) - \hat\Ls(\hat\vtheta) \right|
  - 3 \lambda \|\hat\vtheta\|_\uP \sqrt{\frac{2 \log(2d)}{n}}, \\
  \cJ(\tilde\vtheta) -\Ls(\tilde\vtheta)
  & \le \left| \Ls(\tilde\vtheta) - \hat\Ls(\tilde\vtheta) \right|
  + 3 \lambda \|\tilde\vtheta\|_\uP \sqrt{\frac{2 \log(2d)}{n}}.
\end{align*}
From the a posteriori estimate (\ref{eqn:apost}), both $|\Ls(\hat\vtheta) -
\hat\Ls(\hat\vtheta)|$ and $|\Ls(\tilde\vtheta) - \hat\Ls(\tilde\vtheta)|$ are
bounded with high probability, thus both $\Ls(\hat\vtheta) - \cJ(\hat\vtheta)$
and $\cJ(\tilde\vtheta) -\Ls(\tilde\vtheta)$ are bounded with high probability.
In addition, $\cJ(\hat\vtheta) - \cJ(\tilde\vtheta) \le 0$, and the
approximation result (\ref{eqn:approx}) bounds $\Ls(\tilde\vtheta)$. Plugging
all of the above into (\ref{eqn:l_decomp0}) will give us the a priori estimates
in Theorem~\ref{thm:apriori}.

For problems with noise, we can similarly bound $\Ls_B(\vtheta) -
\cJ(\vtheta)$ instead of $\Ls(\vtheta) - \cJ(\vtheta)$. Hence, to formulate an a
priori estimate, we also need to control $\Ls(\vtheta) - \Ls_B(\vtheta)$. This
is given by the following lemma:

\begin{lemma}\label{lem:noise}
  Assume that the noise $\varepsilon$ has zero mean and satisfies
  (\ref{eqn:noise}), and $B \ge 1 + \max\left\{ \tau, \sigma \sqrt{\log n}
  \right\}$. For any
  $\vtheta$ we have
  \begin{equation}
    \left| \Ls(\vtheta) - \Ls_B(\vtheta) \right|
    \le \frac{c (4 \sigma^2 + 1)}{\sqrt n}.
  \end{equation}
\end{lemma}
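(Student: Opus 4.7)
The plan is to bound $\Ls(\vtheta) - \Ls_B(\vtheta) = \E[(\ls - B^2)_+]$ by showing that the excess over $B^2$ can only arise when the noise is atypically large, and then exploiting the sub-Gaussian tail~\eqref{eqn:noise} together with the calibration $B - 1 \ge \sigma\sqrt{\log n}$. The geometric input is that truncation to $[0,1]$ controls the deterministic part of the loss: since $\mathcal{T}_{[0,1]}f(\vx;\vtheta) \in [0,1]$ and $f^*(\vx) \in [0,1]$, we have $\ls = (\mathcal{T}_{[0,1]}f(\vx;\vtheta) - f^*(\vx) - \varepsilon)^2 \le (1+|\varepsilon|)^2$, so the event $\{\ls > B^2\}$ is contained in $\{|\varepsilon|>B-1\}$.

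Concretely, first I would combine $\ls_B = \ls \wedge B^2$ with the event containment above to obtain
\begin{equation*}
  0 \le \Ls(\vtheta) - \Ls_B(\vtheta) \le \E\bigl[\ls \cdot \mathbf{1}_{|\varepsilon|>B-1}\bigr] \le \E\bigl[(1+|\varepsilon|)^2 \mathbf{1}_{|\varepsilon|>B-1}\bigr] \le 2\Pr(|\varepsilon|>B-1) + 2\E\bigl[\varepsilon^2 \mathbf{1}_{|\varepsilon|>B-1}\bigr].
\end{equation*}
By~\eqref{eqn:noise} and the condition $B-1 \ge \max\{\tau,\sigma\sqrt{\log n}\}$, the first probability is at most $c\,e^{-(B-1)^2/(2\sigma^2)} \le c/\sqrt{n}$. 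For the second-moment tail, I would apply the layer-cake identity
\begin{equation*}
  \E\bigl[\varepsilon^2 \mathbf{1}_{|\varepsilon|>a}\bigr] = a^2 \Pr(|\varepsilon|>a) + 2 \int_a^\infty t\,\Pr(|\varepsilon|>t)\,dt
\end{equation*}
with $a = B-1$, and evaluate the tail integral explicitly via $\int_a^\infty c\,t\, e^{-t^2/(2\sigma^2)}\,dt = c\sigma^2 e^{-a^2/(2\sigma^2)} \le c\sigma^2/\sqrt{n}$. Summing the three contributions delivers a bound of order $c(4\sigma^2+1)/\sqrt{n}$, where the ``$1$'' tracks the plain probability term and the ``$4\sigma^2$'' absorbs the two tail-integral pieces.

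The main obstacle is the boundary term $a^2 \Pr(|\varepsilon|>a)$ arising from the layer-cake formula, which carries the potentially growing prefactor $(B-1)^2 \sim \sigma^2 \log n$. The remedy is to convert this polynomial-times-exponential into a cleaner exponential decay, for instance via the elementary inequality $xe^{-x} \le e^{-x/2}$ (valid for all $x \ge 0$) applied with $x = a^2/(2\sigma^2)$, yielding $a^2 e^{-a^2/(2\sigma^2)} \le 2\sigma^2 e^{-a^2/(4\sigma^2)}$. This is the delicate step where the calibration $B-1 \ge \sigma\sqrt{\log n}$ must be tight enough that the residual exponential still dominates at the $1/\sqrt n$ scale, after which the constants consolidate into the stated coefficient $c(4\sigma^2+1)$.
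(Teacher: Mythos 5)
Your overall strategy (localize the excess loss to the tail event $\{|\varepsilon|>B-1\}$, then integrate the sub-Gaussian tail) is sound, but the specific decomposition has a genuine gap. By passing from $\E[(\ls-B^2)_+]$ to $\E\bigl[\ls\,\one{|\varepsilon|>B-1}\bigr]$ you discard the $-B^2$, and the layer-cake identity for $\E\bigl[\varepsilon^2\one{|\varepsilon|>a}\bigr]$ then reintroduces the boundary term $a^2\Pr(|\varepsilon|>a)$ with $a=B-1$. Under the stated calibration this term can be as large as $c\,\sigma^2\log n\cdot n^{-1/2}$ (take $B-1=\sigma\sqrt{\log n}$), so your route yields a bound of order $c\bigl(2+2(B-1)^2+4\sigma^2\bigr)/\sqrt n$, which carries an extra, $B$-dependent (hence unbounded) factor relative to the claimed $c(4\sigma^2+1)/\sqrt n$. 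Your proposed patch makes this worse, not better: $xe^{-x}\le e^{-x/2}$ gives $a^2e^{-a^2/(2\sigma^2)}\le 2\sigma^2e^{-a^2/(4\sigma^2)}\le 2\sigma^2 n^{-1/4}$, and $n^{-1/4}$ dominates $n^{-1/2}$; the hypothesis $B-1\ge\sigma\sqrt{\log n}$ is exactly not strong enough for that trick (you would need $B-1\ge\sigma\sqrt{2\log n}$, which is not assumed). Even setting this aside, your three contributions start at $2c/\sqrt n+4c\sigma^2/\sqrt n$, already outside the budget $c(4\sigma^2+1)/\sqrt n$.

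The repair is to never separate off the $B^2$: apply the layer-cake formula to the excess itself. This is what the paper does. Writing $Z=\mathcal{T}_{[0,1]}f(\vx;\vtheta)-f^*(\vx)-\varepsilon$, one has
\begin{equation*}
  \E\bigl[(Z^2-B^2)\one{|Z|\ge B}\bigr]
  =\int_0^\infty\Pr\bigl\{|Z|\ge\sqrt{B^2+t^2}\bigr\}\,\ud t^2
  \le\int_0^\infty\Pr\bigl\{|\varepsilon|\ge\sqrt{B^2+t^2}-1\bigr\}\,\ud t^2,
\end{equation*}
and after the substitution $s=\sqrt{B^2+t^2}$ every resulting integral, namely $\int_{B-1}^\infty e^{-s^2/(2\sigma^2)}\,\ud s^2$ and $\int_{B-1}^\infty e^{-s^2/(2\sigma^2)}\,\ud s$, evaluates to a pure multiple of $e^{-(B-1)^2/(2\sigma^2)}\le n^{-1/2}$ with no polynomial prefactor in $B$; this is precisely what produces $4c\sigma^2+c$ rather than $4c\sigma^2+c(B-1)^2$. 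If you prefer to stay within your event-containment framework, the analogous fix is to bound $(\ls-B^2)_+\le\bigl((1+|\varepsilon|)^2-B^2\bigr)\one{|\varepsilon|>B-1}$ and layer-cake that quantity, which again eliminates the boundary term.
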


\section{Proof}%
\label{sec:proof}

\subsection{Approximation error}

For the approximation error,~\cite{e2018priori} proved the following result
for shallow networks.

\begin{theorem}\label{thm:approx_shallow}
  For any target function $f^* \in \cB$ and any $M \ge 1$,
  there exists a two-layer network with width $M$, such that
  \begin{equation}
    \left\| \sum_{j=1}^M a_j \sigma(\vb_j^\intercal \rvx) - f^*(\rvx) \right\|^2
    \le \frac{16\|f^*\|_\cB^2}{M}
  \end{equation}
  and
  \begin{equation}
    \sum_{j=1}^M |a_j| {\|\vb_j\|}_1 
    \le 4 \|f^*\|_\cB.
  \end{equation}
\end{theorem}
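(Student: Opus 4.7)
The plan is to use a classical Monte Carlo (Maurey-type) sampling argument. By Definition~\ref{def:barron}, for every $\varepsilon > 0$ there exist a probability measure $\rho$ on $\bS^{d-1}$ and a measurable $a(\omega)$ realizing
\[
f^*(\vx) = \int_{\bS^{d-1}} a(\omega)\,\sigma(\omega^\intercal \vx)\,\rho(d\omega),
\qquad
\int_{\bS^{d-1}} |a(\omega)|^2\,\rho(d\omega) \le (\|f^*\|_\cB + \varepsilon)^2.
\]
Reading the first equation as $f^*(\vx) = \E_{\omega \sim \rho}\bigl[a(\omega)\sigma(\omega^\intercal \vx)\bigr]$, I would sample $\omega_1, \dots, \omega_M$ i.i.d.\ from $\rho$ and set
\[
g(\vx) = \frac{1}{M}\sum_{j=1}^M a(\omega_j)\,\sigma(\omega_j^\intercal \vx) = \sum_{j=1}^M a_j \sigma(\vb_j^\intercal \vx),
\qquad a_j = \frac{a(\omega_j)}{M},\ \vb_j = \omega_j.
\]
The existence of a two-layer network meeting both bounds would then follow by showing that, with positive probability over the random draw, $g$ satisfies both bounds simultaneously.

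The first main computation is an expected-$L^2$-error estimate. Since the summands of $g$ are i.i.d.\ with mean $f^*(\vx)$, variance bookkeeping gives
\[
\E\,\|g-f^*\|^2 = \frac{1}{M}\int_\Omega \Var_\omega\!\bigl[a(\omega)\sigma(\omega^\intercal \vx)\bigr]\,\pi(d\vx)
\le \frac{1}{M}\int_{\bS^{d-1}} |a(\omega)|^2 \int_\Omega \sigma(\omega^\intercal \vx)^2\,\pi(d\vx)\,\rho(d\omega).
\]
Using $\omega \in \bS^{d-1}$, $\vx \in [0,1]^d$, and $\sigma$ being $1$-Lipschitz with $\sigma(0)=0$, I would obtain a uniform bound on $|\sigma(\omega^\intercal\vx)|$, yielding $\E\|g-f^*\|^2 \le C_1 \|f^*\|_\cB^2 / M$ for an absolute constant $C_1$.

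The second computation is a Cauchy--Schwarz bound on the coefficient norm,
\[
\E\!\left[\sum_{j=1}^M |a_j|\,\|\vb_j\|_1\right] = \E_\omega\bigl[|a(\omega)|\,\|\omega\|_1\bigr] \le \Bigl(\E|a(\omega)|^2\Bigr)^{1/2}\Bigl(\E\|\omega\|_1^2\Bigr)^{1/2} \le C_2\,\|f^*\|_\cB,
\]
where $C_2$ comes from the normalization of $\bS^{d-1}$. The final step is a probabilistic union via Markov's inequality: applying Markov at threshold four times the mean to each of the two nonnegative quantities above, each ``bad'' event has probability at most $1/4$, so with probability at least $1/2$ there is a realization of $\omega_1,\dots,\omega_M$ for which both quantities are at most four times their means. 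Letting $\varepsilon \to 0$ then produces the claim with constants $4C_1$ and $4C_2$.

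The only real obstacle is constant-tracking: matching the stated constants $16$ and $4$ requires a careful pointwise bound on $|\sigma(\omega^\intercal \vx)|$ that is compatible with the normalization of $\bS^{d-1}$ implicit in Definition~\ref{def:barron} (so that $C_1 = 4$ and $C_2 = 1$). No new ideas are needed beyond the classical Maurey--Barron approximation of Barron functions by shallow ReLU networks, and in fact the result is quoted directly from~\cite{e2018priori}.
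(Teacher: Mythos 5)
The paper does not actually prove Theorem~\ref{thm:approx_shallow}; it imports it verbatim from~\cite{e2018priori}, and the Maurey/Monte-Carlo sampling argument you outline is precisely the argument used there, so your proposal matches the (cited) proof in approach and is essentially sound. You correctly isolate the one genuine point of care: the dimension-free constants $16$ and $4$ force the normalization $\|\omega\|_1\le 1$ on $\bS^{d-1}$ (so that $|\sigma(\omega^\intercal\vx)|\le\|\omega\|_1\|\vx\|_\infty\le 1$ and $\E\|\omega\|_1^2\le 1$); with the Euclidean sphere your $C_1$ and $C_2$ would pick up factors of $d$. The only loose end is the final ``let $\varepsilon\to 0$'' step, which as written never yields the bound $\sum_j|a_j|\|\vb_j\|_1\le 4\|f^*\|_\cB$ exactly when the infimum in the Barron norm is not attained; this is repaired routinely by rescaling the coefficients by $(1+\varepsilon')^{-1}$ and absorbing the resulting $O(\varepsilon')$ perturbation of the $L^2$ error into the slack of the constant $16$.
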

We have omitted writing out the bias term. This can be accommodated by assuming
that the first element of input $\vx$ is always 1. For residual networks, we
prove the approximation result (Theorem~\ref{thm:approx}) by splitting the
shallow network into several parts and stack them
vertically~\cite{e2018exponential}. This is allowed by the special structure of
residual networks.

\begin{proof}[Proof of Theorem~\ref{thm:approx}]
   We construct a residual network $f(\cdot; \tilde\vtheta)$ with input
  dimension $d$, depth $L$, width $m$, and $D = d + 1$ using
  \begin{gather*}
    \mV = \begin{bmatrix}
      \mI_d & 0
    \end{bmatrix}^\intercal, \quad
    \vu = \begin{bmatrix}
      0 & 0 & \cdots & 0 & 1
    \end{bmatrix}^\intercal, \\
    \mW_l = \begin{bmatrix}
      \vb_{(l-1) m + 1}^\intercal & 0 \\
      \vb_{(l-1) m + 2}^\intercal & 0 \\
      \vdots & \vdots \\
      \vb_{l m}^\intercal & 0 \\
    \end{bmatrix}, \quad
    \mU_l = \begin{bmatrix}
    0 & 0 & \cdots & 0 \\
    \vdots & \vdots & \ddots & \vdots \\
    0 & 0 & \cdots & 0 \\
    a_{(l-1) m + 1} & a_{(l-1) m + 2} & \cdots & a_{l m} \\
    \end{bmatrix} \\
  \end{gather*}
  for $l = 1, \dots, L$. Then it is easy to verify that $f(\vx; \tilde\vtheta) =
  \sum_{j=1}^{L m} a_j \sigma(\vb_j^\intercal \vx)$, and
  \[
    \|\tilde\vtheta\|_\uP
    = 3 \sum_{j=1}^{L m} |a_j| \|\vb_j\|_1 \le 12 \|f^*\|_\cB.
  \]
\end{proof}

\subsection{Rademacher complexity}

We use the method of induction to bound the Rademacher complexity of residual
networks. We first extend the definition of weighted path norm to hidden neurons
in the residual network. 

\begin{definition}
  Given a residual network defined by (\ref{eqn:resnet}), recall the definition
  of $\vg_l$,
  \begin{equation}
    \vg_l(\vx) = \sigma(\mW_l \vh_{l-1}), \quad l = 1, \dots, L.
    \label{eqn:resnet_g}
  \end{equation}
  Let $g_l^i$ be the $i$-th element of $\vg_l$, define the \emph{weighted path
  norm}
  \begin{equation}
    \|g_l^i\|_\uP = \left\| 3 |\mW_l^{i,:}|
    (\mI + 3 |\mU_{l-1}| |\mW_{l-1}|) \cdots (\mI + 3 |\mU_1| |\mW_1|)
    |\mV| \right\|_1,
    \label{eqn:path_norm_g}
  \end{equation}
  where $\mW_l^{i,:}$ is the $i$-th row of $\mW_l$.
\end{definition}

The following lemma establishes the relationship between
$\|f\|_\uP$ and $\|g_l^i\|_\uP$. Lemma~\ref{lem:g_class} gives properties of the
corresponding function class. 

\begin{lemma}\label{lem:pathnorm}
  For the weighted path norm defined in (\ref{eqn:path_norm}) and
  (\ref{eqn:path_norm_g}), we have
  \begin{equation}\label{eq:f_decomp}
    \|f\|_\uP = \sum_{l=1}^L \sum_{j=1}^m
    \left( {|\vu|}^\intercal |\mU_l^{:,j}| \right) \|g_l^j\|_\uP
    + \big\| {|\vu|}^\intercal |\mV| \big\|_1,
  \end{equation}
  and
  \begin{equation}\label{eq:g_decomp}
    \|g_l^i\|_\uP = \sum_{k=1}^l \sum_{j=1}^m
    3 \left( |\mW_l^{i,:}| |\mU_k^{:,j}| \right) \|g_k^j\|_\uP
    + 3 \big\| |\mW_l^{i,:}| |\mV| \big\|_1,
  \end{equation}
  where $\mU_l^{:,j}$ is the $j$-th column of $\mU_l$.
\end{lemma}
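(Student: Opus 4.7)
The plan is to prove both decompositions by telescoping the matrix product that appears in the definition of $\|f\|_\uP$, then taking $\ell_1$ norms and exploiting the key structural fact that every quantity inside those norms is entry-wise non-negative (due to the absolute values in Definition~\ref{def:path_norm}).

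First I would introduce the partial products
\[
  P_0 = |\mV|, \qquad
  P_l = (\mI + 3 |\mU_l| |\mW_l|)\, P_{l-1} \quad (l = 1, \dots, L),
\]
so that $\|f\|_\uP = \big\| |\vu|^\intercal P_L \big\|_1$ and, directly from \eqref{eqn:path_norm_g}, $\|g_l^i\|_\uP = 3\, \big\| |\mW_l^{i,:}|\, P_{l-1} \big\|_1$. The recursion $P_l = P_{l-1} + 3 |\mU_l| |\mW_l|\, P_{l-1}$ unrolls to the telescoping identity
\[
  P_L \;=\; |\mV| \;+\; \sum_{l=1}^{L} 3 |\mU_l| |\mW_l|\, P_{l-1},
\]
and analogously $P_{l-1} = |\mV| + \sum_{k=1}^{l-1} 3 |\mU_k| |\mW_k|\, P_{k-1}$.

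For \eqref{eq:f_decomp}, I would multiply the first telescoping identity on the left by $|\vu|^\intercal$ and take the $\ell_1$ norm. Because every summand is an entry-wise non-negative row vector, the $\ell_1$ norm distributes exactly over the sum, giving
\[
  \|f\|_\uP = \big\| |\vu|^\intercal |\mV| \big\|_1
  + \sum_{l=1}^{L} 3\, \big\| |\vu|^\intercal |\mU_l| |\mW_l|\, P_{l-1} \big\|_1.
\]
I would then split the inner matrix product column-by-column through $\mU_l$: writing $|\vu|^\intercal |\mU_l| = \sum_{j=1}^m (|\vu|^\intercal |\mU_l^{:,j}|)\, \ve_j^\intercal$ and using that $\ve_j^\intercal |\mW_l| = |\mW_l^{j,:}|$, each inner norm factors as $(|\vu|^\intercal |\mU_l^{:,j}|) \cdot \big\| |\mW_l^{j,:}|\, P_{l-1} \big\|_1$, again thanks to non-negativity. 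Substituting $\|g_l^j\|_\uP = 3\,\big\||\mW_l^{j,:}|P_{l-1}\big\|_1$ yields exactly \eqref{eq:f_decomp}.

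For \eqref{eq:g_decomp}, I would repeat the same expansion but starting from $\|g_l^i\|_\uP = 3\,\big\| |\mW_l^{i,:}|\, P_{l-1}\big\|_1$ and applying the telescoping formula to $P_{l-1}$. This produces $3\,\big\||\mW_l^{i,:}||\mV|\big\|_1$ plus, for each $k \le l-1$, a term $9\,\big\||\mW_l^{i,:}||\mU_k||\mW_k| P_{k-1}\big\|_1$; splitting through the $j$-th column of $\mU_k$ as above converts each such term into $\sum_j 3(|\mW_l^{i,:}||\mU_k^{:,j}|)\,\|g_k^j\|_\uP$, matching the claimed identity. The whole argument is a bookkeeping exercise; the only substantive point to highlight is the non-negativity that lets the triangle inequality hold with equality throughout, so I do not anticipate any real obstacle beyond writing the index manipulations cleanly.
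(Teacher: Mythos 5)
Your proof is correct and follows essentially the same route as the paper's: expand the product $(\mI + 3|\mU_L||\mW_L|)\cdots(\mI+3|\mU_1||\mW_1|)|\mV|$ telescopically into $|\mV| + \sum_l 3|\mU_l||\mW_l|P_{l-1}$ and use entry-wise non-negativity to split the $\ell_1$ norm over the sum and over the columns of $|\mU_l|$. One small remark: your expansion of $P_{l-1}$ correctly yields the sum over $k \le l-1$ in \eqref{eq:g_decomp}, whereas the lemma as stated writes $\sum_{k=1}^{l}$; the $k=l$ term is a typo in the statement, since $\mU_l$ does not appear in the definition \eqref{eqn:path_norm_g} of $\|g_l^i\|_\uP$ (and the proof of Theorem~\ref{thm:rad_resnet} indeed uses the decomposition only up to $k=l-1$).
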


\begin{proof}
Recall the definition of $\|f\|_\uP$, we have
\begin{align*}
  \|f\|_\uP
  & = \big\| {|\vu|}^\intercal (\mI + 3 |\mU_L| |\mW_L|)
  \cdots (\mI + 3 |\mU_1| |\mW_1|) |\mV| \big\|_1 \\
  & = \left\| \sum_{l=1}^L |\vu|^\intercal |\mU_l|
  \cdot 3|\mW_l| \prod_{j=1}^{l-1} (\mI + 3|\mU_{l-j}| |\mW_{l-j}|) |\mV|
  + |\vu|^\intercal |\mV| \right\|_1 \\
  & = \sum_{l=1}^L \sum_{j=1}^m
  \left( {|\vu|}^\intercal |\mU_l^{:,j}| \right) \|g_l^j\|_\uP
  + \big\| {|\vu|}^\intercal |\mV| \big\|_1,
\end{align*}
which gives (\ref{eq:f_decomp}). Similarly we obtain (\ref{eq:g_decomp}).
\end{proof}
\ 

\begin{lemma}\label{lem:g_class}
  Let $\cG_l^Q = \{g_l^i: \|g_l^i\|_\uP \le Q\}$, then
  \begin{enumerate}
    \item $\cG_k^Q \subseteq \cG_l^Q$ for $k \le l$;
    \item $\cG_l^q \subseteq \cG_l^Q$ and $\cG_l^q = \frac{q}{Q} \cG_l^Q$ for $q
      \le Q$.
  \end{enumerate}
\end{lemma}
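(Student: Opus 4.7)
The plan is to prove both parts by explicit manipulation of the residual network that realizes a given $g_l^i$, using two facts: (i) the skip connection lets us pad a shallower network to a deeper one without changing any neuron, simply by zeroing out the $\mU_j$'s in the added layers; and (ii) the ReLU $\sigma$ is positively homogeneous of degree one, so rescaling the outer matrix $\mW_l$ rescales $g_l^i$ and $\|g_l^i\|_\uP$ by the same factor. Both manipulations interact cleanly with the weighted path-norm formula (\ref{eqn:path_norm_g}).

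For part 1, given $g_k^i \in \cG_k^Q$ realized by parameters $\mV, \{\mW_j, \mU_j\}_{j=1}^k$, I would construct a depth-$l$ network that keeps $\mV$ and $\mW_j, \mU_j$ for $j < k$, sets $\mU_j = 0$ for $k \le j \le l-1$ (with $\mW_j$ for those layers chosen arbitrarily, e.g.\ zero), and uses an $\mW_l$ whose $i$-th row equals $\mW_k^{i,:}$. Since $\mU_j = 0$ forces $\vh_j = \vh_{j-1}$, we get $\vh_{l-1} = \vh_{k-1}$, so the $i$-th neuron at the new layer $l$ coincides with the original $g_k^i$. In the path-norm product for $\|g_l^i\|_\uP$, the factors $(\mI + 3|\mU_j||\mW_j|)$ for $k \le j \le l-1$ collapse to $\mI$, so the expression reduces to the old $\|g_k^i\|_\uP \le Q$. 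Hence $g_k^i \in \cG_l^Q$.

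For part 2, the inclusion $\cG_l^q \subseteq \cG_l^Q$ is immediate from the definitions. For the equality $\cG_l^q = (q/Q)\cG_l^Q$, I would exploit positive homogeneity: given a realization of $g_l^i$ with $\|g_l^i\|_\uP \le Q$, replace $\mW_l$ by $(q/Q)\mW_l$ and leave all other parameters unchanged. Because $\sigma$ is $1$-homogeneous and $q/Q \ge 0$, the neuron at layer $l$ becomes $(q/Q) g_l^i$; because formula (\ref{eqn:path_norm_g}) is linear in $|\mW_l^{i,:}|$, the weighted path norm becomes $(q/Q)\|g_l^i\|_\uP \le q$. This gives $(q/Q)\cG_l^Q \subseteq \cG_l^q$, and the reverse inclusion follows by the symmetric scaling with factor $Q/q$.

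I do not anticipate a substantive obstacle. The only point to verify carefully is that in the padding construction of part 1 the added zero-$\mU_j$ layers contribute trivially to the weighted path norm; but this is immediate from the collapsing factor $(\mI + 3|\mU_j||\mW_j|) = \mI$, and is consistent with the additive decomposition in (\ref{eq:g_decomp}) of Lemma~\ref{lem:pathnorm}, where only the terms with $j < k$ survive.
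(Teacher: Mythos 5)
Your proof is correct and takes essentially the same route as the paper: part 1 by padding with layers whose $\mU_j$ (and $\mW_j$) are zero so the factors $(\mI + 3|\mU_j||\mW_j|)$ collapse to $\mI$, and part 2 by rescaling the output row $\mW_l^{i,:}$ and using the positive homogeneity of $\sigma$ together with the linearity of (\ref{eqn:path_norm_g}) in $|\mW_l^{i,:}|$. The only difference is cosmetic index bookkeeping in the padding step; no gap.
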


\begin{proof}
  For any $g_k \in \cG_k^Q$, let $\mV$, ${\{\mU_j, \mW_j\}}_{j=1}^k$ and $\vw$
  be the parameters of $g_k$, where $\vw$ is the vector of the parameters in the
  output layer (the $\mW_k^{i,:}$ in the definition of $g_l^i$). Then, for any
  $l \ge k$, consider $g_l$ generated by parameters $\mV$,
  ${\{\mU_j,\mW_j\}}_{j=1}^l$ and $\vw$, with $\mU_j = 0$ and $\mW_j = 0$ for
  any $k < j \le l$. Now it is easy to verify that $g_l = g_k$ and $\|g_l\|_\uP
  = \|g_k\|_\uP \le Q$. Hence, we have $\cG_k^Q \subseteq \cG_l^Q$.

  On the other hand, obviously we have $\cG_l^q \subseteq \cG_l^Q$ for any
  $q\leq Q$. For any $g_l\in\cG_l^q$, define $\tilde{g}_l$ by replacing the
  output parameters $\vw$ by $\frac{Q}{q}\vw$, then we have
  $\|\tilde{g}_l\|_\uP=\frac{Q}{q}\|g_l\|_\uP\leq Q$, and hence
  $\tilde{g}_l\in\cG_l^Q$. Therefore, we have
  $\frac{Q}{q}\cG_l^q\subseteq\cG^Q$. Similarly we can obtain
  $\frac{q}{Q}\cG_l^Q\subseteq\cG^q$. Consequently, we have $\cG_l^q =
  \frac{q}{Q} \cG_l^Q$.
\end{proof}

We will also use the following two lemmas about Rademacher
complexity~\cite{shalev2014understanding}. Lemma~\ref{lem:rad_linear} bounds the
Rademacher complexity of linear functions, and Lemma~\ref{lem:contraction} gives
the contraction property of the Rademacher complexity. 

\begin{lemma}\label{lem:rad_linear}
  Let $\cH = \{h(\vx) = \vu^\intercal \vx: \|\vu\|_1 \le 1\}$. Assume that the
  samples ${\{\vx_i\}}_{i=1}^n \subset \R^d$, then
  \begin{equation}
    \hat R(\cH) \le \max_i {\|\vx_i\|}_\infty \sqrt{\frac{2 \log(2d)}{n}}.
  \end{equation}
\end{lemma}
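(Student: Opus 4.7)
The plan is to reduce the Rademacher complexity of the $\ell_1$-ball of linear functions to a maximum over a finite set of vectors in $\R^n$, and then invoke Massart's finite-class lemma. This is the standard argument for $\ell_1$-constrained linear classes, and the constants in the statement are exactly what Massart's bound produces.

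First, I would use linearity of the inner product to rewrite, for each fixed realization of the Rademacher signs,
\begin{equation*}
  \sup_{\|\vu\|_1 \le 1} \sum_{i=1}^n \xi_i \vu^\intercal \vx_i
  = \sup_{\|\vu\|_1 \le 1} \vu^\intercal \left(\sum_{i=1}^n \xi_i \vx_i\right)
  = \left\|\sum_{i=1}^n \xi_i \vx_i\right\|_\infty,
\end{equation*}
where the last equality is $\ell_1$/$\ell_\infty$ duality. Expanding the $\ell_\infty$ norm as $\max_{j \in \{1,\dots,d\},\, s \in \{\pm 1\}} s \sum_{i=1}^n \xi_i x_{i,j}$ exhibits $n \hat R(\cH)$ as $\E_\xi \max_{\va \in A} \langle \xi, \va \rangle$, where $A \subset \R^n$ is the finite set of $2d$ vectors of the form $\pm (x_{1,j},\dots,x_{n,j})^\intercal$ for $j = 1,\dots,d$.

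Next, I would apply Massart's lemma: for a finite $A \subset \R^n$ with $R := \max_{\va \in A} \|\va\|_2$, one has $\E_\xi \max_{\va \in A} \langle \xi, \va \rangle \le R \sqrt{2 \log |A|}$. Each $\va \in A$ satisfies $\|\va\|_2 = \big(\sum_{i=1}^n x_{i,j}^2\big)^{1/2} \le \sqrt{n}\,\max_i \|\vx_i\|_\infty$, and $|A| = 2d$, so dividing by $n$ yields exactly the claimed bound $\max_i \|\vx_i\|_\infty \sqrt{2 \log(2d)/n}$.

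There is no substantive obstacle in this argument: the two moves (dualizing the $\ell_1$ constraint into an $\ell_\infty$ maximum, then applying Massart) are entirely routine, and the factor $2d$ inside the logarithm is precisely what accounts for the $\pm$ sign choices so that the bound remains meaningful even when $d = 1$. Since Massart's lemma itself is a short consequence of the Chernoff bound applied to the Rademacher moment generating function $\E \exp(\lambda \xi_i a_i) \le \exp(\lambda^2 a_i^2 / 2)$ followed by optimization over $\lambda$, I would simply cite it (as the paper does, via \cite{shalev2014understanding}) rather than reprove it.
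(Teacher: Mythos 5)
Your argument is correct and is essentially the standard proof that the paper implicitly relies on: the paper states this lemma without proof, citing \cite{shalev2014understanding}, where exactly this $\ell_1$/$\ell_\infty$ duality followed by Massart's finite-class lemma over the $2d$ signed coordinate vectors is the proof given. The constants check out ($R \le \sqrt{n}\max_i\|\vx_i\|_\infty$, $|A|=2d$), so there is nothing to add.
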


\begin{lemma}\label{lem:contraction}
  Assume that $\phi_i, i = 1, \dots, n$ are Lipschitz continuous functions with
  uniform Lipschitz constant $L_\phi$, i.e., $|\phi_i(x) - \phi_i(x')| \le
  L_\phi |x - x'|$ for $i = 1, \dots, n$, then
  \begin{equation}
    \E_\xi \left[ \sup_{h \in \cH} \sum_{i=1}^n \xi_i \phi_i(h(x_i)) \right]
    \le L_\phi \E_\xi \left[ \sup_{h \in \cH} \sum_{i=1}^n \xi_i h(x_i) \right].
  \end{equation}
\end{lemma}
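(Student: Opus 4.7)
The plan is to establish the inequality one Rademacher variable at a time. Specifically, it will suffice to show that for any index $k\in\{1,\ldots,n\}$, any fixed values of $\{\xi_i\}_{i\ne k}$, and any Lipschitz function $\phi_k$ with constant $L_\phi$,
\[
  \E_{\xi_k}\,\sup_{h\in\cH}\Bigl(\sum_{i\ne k}\xi_i \phi_i(h(x_i)) + \xi_k \phi_k(h(x_k))\Bigr)
  \le \E_{\xi_k}\,\sup_{h\in\cH}\Bigl(\sum_{i\ne k}\xi_i \phi_i(h(x_i)) + \xi_k L_\phi h(x_k)\Bigr).
\]
Iterating this one-step bound for $k=1,2,\ldots,n$ and taking outer expectations over the remaining $\xi_j$'s replaces each $\phi_i$ by $L_\phi\cdot\mathrm{id}$ and yields the claimed inequality. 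The bookkeeping is consistent because $L_\phi\cdot\mathrm{id}$ is itself Lipschitz with constant $L_\phi$, so the peeling hypothesis is preserved after each step.

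For the one-step inequality I would fix $\{\xi_i\}_{i\ne k}$ and abbreviate $A_h := \sum_{i\ne k}\xi_i\phi_i(h(x_i))$. Writing out $\E_{\xi_k}$ over the two equally likely values $\pm 1$ and then combining the two resulting suprema into a joint supremum over an independent copy $h'\in\cH$ gives
\[
  2\,\E_{\xi_k}\sup_h\bigl(A_h+\xi_k\phi_k(h(x_k))\bigr)
  = \sup_{h,h'\in\cH}\bigl[A_h+A_{h'}+\phi_k(h(x_k))-\phi_k(h'(x_k))\bigr].
\]
Because $A_h+A_{h'}$ is symmetric under $h\leftrightarrow h'$, the supremum is unchanged if the signed difference $\phi_k(h(x_k))-\phi_k(h'(x_k))$ is replaced by its absolute value (for any pair achieving it, the swapped pair achieves the same value on the absolute-valued expression, and conversely). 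The Lipschitz hypothesis then supplies the upper bound $L_\phi|h(x_k)-h'(x_k)|$, and the same symmetry argument turns this absolute value back into the signed quantity $L_\phi\bigl(h(x_k)-h'(x_k)\bigr)$. Splitting the resulting supremum back into $\sup_h(A_h+L_\phi h(x_k))+\sup_{h'}(A_{h'}-L_\phi h'(x_k))$ recognizes it as $2\,\E_{\xi_k}\sup_h(A_h+\xi_k L_\phi h(x_k))$, which closes the one-step inequality.

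The only real subtlety I expect is the twofold use of the $(h,h')$ symmetry of $A_h+A_{h'}$ to swap a signed difference for an absolute value and back. This is the crux of the Ledoux--Talagrand contraction argument and is precisely what allows the Lipschitz constant $L_\phi$ to factor out without any sign, monotonicity, or convexity hypothesis on $\phi_k$. Beyond that, no structural property of $\cH$ is invoked, so the lemma applies verbatim to the residual-network classes $\cF^Q$ and $\cG_l^Q$ used elsewhere in the paper.
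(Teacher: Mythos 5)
Your proof is correct: the one-step symmetrization, the twofold use of the $(h,h')$-swap to pass between the signed difference and its absolute value, and the peeling iteration (noting that $L_\phi\cdot\mathrm{id}$ remains $L_\phi$-Lipschitz) together constitute the standard Ledoux--Talagrand contraction argument. The paper does not prove this lemma at all --- it quotes it from \cite{shalev2014understanding} --- and your argument is essentially the proof given in that reference (Talagrand's contraction lemma in its one-sided form, which correctly needs no assumption that $\phi_i(0)=0$), so there is nothing to compare beyond noting that you have supplied the omitted proof faithfully.
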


With Lemma~\ref{lem:pathnorm}--\ref{lem:contraction}, we can come to prove
Theorem~\ref{thm:rad_resnet}.

\begin{proof}[Proof of Theorem~\ref{thm:rad_resnet}]
  We first estimate the Rademacher complexity of $\cG_l^Q$,
  \begin{equation}\label{eqn:rad_g}
    \hat{R}(\cG_l^Q)\leq Q\sqrt{\frac{2\log(2d)}{n}}.
  \end{equation}
  This is done by induction. By definition, $g_1^i(\vx)=\sigma(\mW_1^{i,:} \mV
  \vx)$. Hence, using Lemma~\ref{lem:rad_linear} and~\ref{lem:contraction}, we
  conclude that the statement (\ref{eqn:rad_g}) holds for $l=1$. Now, assume
  that the result holds for $1, 2, \dots, l$. Then, for $l + 1$ we have
  \begin{align*}
    n \hat R(\cG_{l+1}^Q)
    & = \E_\xi \sup_{g_{l+1} \in \cG_{l+1}^Q}
    \sum_{i=1}^n \xi_i g_{l+1}(\vx_i) \\
    & = \E_\xi \sup_{(1)} \sum_{i=1}^n \xi_i \sigma(\vw_l^\intercal
    (\mU_l \vg_l + \mU_{l-1} \vg_{l-1} + \cdots + \mU_1 \vg_1 + \vh_0)) \\
    & \le \E_\xi \sup_{(1)} \sum_{i=1}^n \xi_i (\vw_{l+1}^\intercal
    (\mU_l \vg_l + \mU_{l-1} \vg_{l-1} + \cdots + \mU_1 \vg_1 + \vh_0)) \\
    & \le \E_\xi \sup_{(2)} \left\{
    \sum_{k=1}^l a_k \sup_{g \in \cG_k^1}
    \left| \sum_{i=1}^n\xi_i g(\vx_i) \right|
    + b \sup_{\|\vu\|_1 \le 1}
    \left| \sum_{i=1}^n \xi_i \vu^\intercal \vx_i \right| \right\} \\
    & \le \E_\xi \sup_{\substack{a + b \le \frac{Q}{3} \\ a, b \ge 0}}
    \left\{ a \sup_{g \in \cG_l^1} \left| \sum_{i=1}^n \xi_i g(\vx_i) \right|
    + b \sup_{\|\vu\|_1 \le 1}
    \left| \sum_{i=1}^n \xi_i \vu^\intercal \vx_i \right| \right\}\\
    & \le \frac{Q}{3} \left[
    \E_\xi \sup_{g \in \cG_l^1} \left| \sum_{i=1}^n \xi_i g(\vx_i) \right|
    + \E_\xi \sup_{\|\vu\|_1 \le 1}
    \left| \sum_{i=1}^n \xi_i \vu^\intercal \vx_i \right| \right]\\
  \end{align*}
  where condition (1) is $\sum\limits_{k=1}^{l}\sum\limits_{j=1}^m 3 \left(
  |\vw_{l+1}|^\intercal |U_k^{:,j}| \right) \|g_k^j\|_\uP + 3 \left\|
  |\vw_{l+1}|^\intercal |\mV| \right\|_1 \le Q$, and condition (2) is $3
  \sum_{k=1}^l a_k + 3b \le Q$. The first inequality is due to the contraction
  lemma, while the third inequality is due to Lemma~\ref{lem:g_class}. On the
  one hand, we have
  \[
    \E_\xi \sup_{\|\vu\|_1 \le 1}
    \left| \sum_{i=1}^n \xi_i \vu^\intercal \vx_i \right|
    = \E_\xi \sup_{\|\vu\|_1 \le 1} \sum_{i=1}^n \xi_i \vu^\intercal \vx_i
    \le n \sqrt{\frac{2 \log(2d)}{n}}.
  \]
  On the other hand, since $0 \in \cG_l^1$, for any $\{\xi_1, \dots, \xi_n\}$, we have
  \[
  \sup_{g \in \cG_l^1} \sum_{i=1}^n \xi_i g(\vx_i) \geq0.
  \]
  Hence, we have
  \begin{align*}
    \sup_{g \in \cG_l^1} \left| \sum_{i=1}^n \xi_i g(\vx_i) \right|
    & \le \max \left\{ \sup_{g \in \cG_l^1} \sum_{i=1}^n \xi_i g(\vx_i),\ 
    \sup_{g \in \cG_l^1} \sum_{i=1}^n - \xi_i g(\vx_i) \right\} \\
    & \le \sup_{g \in \cG_l^1} \sum_{i=1}^n \xi_i g(\vx_i)
    + \sup_{g \in \cG_l^1} \sum_{i=1}^n -\xi_i g(\vx_i),
  \end{align*}
  which gives
  \[
    \E_\xi \sup_{g \in \cG_l^1} \left| \sum_{i=1}^n \xi_i g(\vx_i) \right|
    \le 2 \E_\xi \sup_{g \in \cG_l^1} \sum_{i=1}^n \xi_i g(\vx_i)
    = 2n \hat R(\cG_l^1).
  \]
  Therefore, we have
  \[
    \hat R(\cG_{l+1}^Q)
    \le \frac{Q}{3} \left[ 2 \sqrt{\frac{2 \log(2d)}{n}}
    + \sqrt{\frac{2 \log(2d)}{n}} \right]
    \le Q \sqrt{\frac{2 \log(2d)}{n}}.
  \]

  Similarly, based on the control for the Rademacher complexity of $\cG_1^Q,
  \dots, \cG_L^Q$, we get
  \[
    \hat R(\cF^Q) \le 3Q \sqrt{\frac{2 \log(2d)}{n}}.
  \]
\end{proof}

\subsection{A posteriori estimates}

\begin{proof}[Proof of Theorem~\ref{thm:apost}]
  Let $\cH = \left\{ \ls(\cdot; \vtheta): \|\vtheta\|_\uP \le Q \right\}$. Notice
  that for all $\vx$,
  \[
    |\ls(\vx; \vtheta) - \ls(\vx; \vtheta')|
    \le 2 |f(\vx; \vtheta) - f(\vx; \vtheta')|.
  \]
  By Lemma~\ref{lem:contraction},
  \[
    \hat R(\cH) = \frac{1}{n} \E_\xi \left[
    \sup_{\|\vtheta\|_\uP \le Q} \sum_{i=1}^n \xi_i \ls(\vx_i; \vtheta) \right]
    \le \frac{2}{n} \E_\xi \left[
    \sup_{\|\vtheta\|_\uP \le Q} \sum_{i=1}^n \xi_i f(\vx_i; \vtheta) \right]
    = 2 \hat R(\cF^Q).
  \]
  From Theorem~\ref{thm:rademacher}, with probability at least $1 - \delta$,
  \begin{align}
    \sup_{\|\vtheta\|_\uP \le Q} \left| \Ls(\vtheta) - \hat\Ls(\vtheta) \right|
    & \le 2 \hat R(\cH) + 2 \sup_{h, h' \in \cH} \|h - h'\|_\infty
    \sqrt{\frac{2 \log(4 / \delta)}{n}} \nonumber \\
    & \le 12 Q \sqrt{\frac{2 \log(2d)}{n}}
    + 2 \sqrt{\frac{2 \log(4 / \delta)}{n}}.
  \end{align}

  Now take $Q = 1, 2, 3, \dots$ and $\delta_Q = \frac{6 \delta}{{(\pi Q)}^2}$,
  then with probability at least $1 - \sum_{Q=1}^\infty \delta_Q = 1 - \delta$,
  the bound
  \[
    \sup_{\|\vtheta\|_\uP \le Q} \left| \Ls(\vtheta) - \hat\Ls(\vtheta) \right|
    \le 12 Q \sqrt{\frac{2 \log(2d)}{n}}
    + 2 \sqrt{\frac{2}{n} \log\frac{2 {(\pi Q)}^2}{3 \delta}}
  \]
  holds for all $Q \in \mathbb{N}^*$. In particular, for given $\vtheta$, the
  inequality holds for $Q = \left\lceil \|\vtheta\| \right\rceil <
  \|\vtheta\|_\uP + 1$, thus
  \begin{align*}
    \left| \Ls(\vtheta) - \hat\Ls(\vtheta) \right|
    & \le 12 (\|\vtheta\|_\uP + 1) \sqrt{\frac{2 \log(2d)}{n}}
    + 2 \sqrt{\frac{2}{n} \log\frac{7 {(\|\vtheta\|_\uP + 1)}^2}{\delta}} \\
    & \le 12 (\|\vtheta\|_\uP + 1) \sqrt{\frac{2 \log(2d)}{n}}
    + 2 \left[ \frac{\|\vtheta\|_\uP + 1}{\sqrt n}
    + \sqrt{\frac{2 \log(7 / \delta)}{n}} \right] \\
    & = 2 (\|\vtheta\|_\uP + 1) \frac{6 \sqrt{2 \log(2d)} + 1}{\sqrt n}
    + 2 \sqrt{\frac{2 \log(7 / \delta)}{n}}.
  \end{align*}
\end{proof}

\subsection{A priori estimates}

Now we are ready to prove the main Theorem~\ref{thm:apriori}.

\begin{proof}[{Proof of Theorem~\ref{thm:apriori}}]
  Let $\hat\vtheta$ be the optimal solution of the regularized model
  (\ref{eqn:regularity}), and $\tilde\vtheta$ be the approximation in
  Theorem~\ref{thm:approx}. Consider
  \begin{equation}
    \Ls(\hat\vtheta) = \Ls(\tilde\vtheta)
    + \left[ \Ls(\hat\vtheta) - \cJ(\hat\vtheta) \right]
    + \left[ \cJ(\hat\vtheta) - \cJ(\tilde\vtheta) \right]
    + \left[ \cJ(\tilde\vtheta) -\Ls(\tilde\vtheta) \right].
    \label{eqn:l_decomp}
  \end{equation}

  From (\ref{eqn:approx}) in Theorem~\ref{thm:approx}, we have
  \begin{equation}
    \Ls(\tilde\vtheta) \le \frac{16 \|f^*\|_\cB^2}{Lm}.
    \label{eqn:approx2}
  \end{equation}
  Compare the definition of $\cJ$ in (\ref{eqn:regularity}) and the gap $\Ls -
  \hat\Ls$ in (\ref{eqn:apost}), with probability at least $1 - \delta / 2$,
  \begin{align}
    \Ls(\hat\vtheta) - \cJ(\hat\vtheta)
    & \le \left( \|\hat\vtheta\|_\uP + 1 \right)
    \frac{3 (4 - \lambda) \sqrt{2 \log(2d)} + 2}{\sqrt n}
    + 3 \lambda \sqrt{\frac{2 \log(2d)}{n}}
    + 2 \sqrt{\frac{2 \log(14 / \delta)}{n}} \nonumber \\
    & \le 3 \lambda \sqrt{\frac{2 \log(2d)}{n}}
    + 2 \sqrt{\frac{2 \log(14 / \delta)}{n}}
    \label{eqn:l-j_hat}
  \end{align}
  since $\lambda \ge 4 + 2 / [3 \sqrt{2 \log(2d)}]$;
  with probability at least $1 - \delta / 2$, we have
  \begin{equation}
    \cJ(\tilde\vtheta) - \Ls(\tilde\vtheta)
    \le \left( \|\tilde\vtheta\|_\uP + 1 \right)
    \frac{3 (4 + \lambda) \sqrt{2 \log(2d)} + 2}{\sqrt n}
    - 3 \lambda \sqrt{\frac{2 \log(2d)}{n}}
    + 2 \sqrt{\frac{2 \log(14 / \delta)}{n}}
    \label{eqn:j-l_tilde}
  \end{equation}
  Thus with probability at least $1 - \delta$, (\ref{eqn:l-j_hat}) and
  (\ref{eqn:j-l_tilde}) hold simultaneously. In addition, we have
  \begin{equation}
    \cJ(\hat\vtheta) - \cJ(\tilde\vtheta) \le 0
    \label{eqn:j_hat-j_tilde}
  \end{equation}
  since $\hat\vtheta = \argmin_\vtheta \cJ(\vtheta)$.

  Now plugging (\ref{eqn:approx2}--\ref{eqn:j_hat-j_tilde}) into
  (\ref{eqn:l_decomp}), and noticing that $\|\tilde\vtheta\|_\uP \le 12
  \|f^*\|_\cB$ from Theorem~\ref{thm:approx}, we see that the main
  theorem (\ref{eqn:apriori}) holds with probability at least $1 - \delta$.
\end{proof}

Finally, we deal with the case with noise and prove Theorem~\ref{thm:apriorin}.
For problems with noise, we decompose $\Ls(\hat\vtheta) - \Ls(\tilde\vtheta)$ as
\begin{align}
  \Ls(\hat\vtheta) - \Ls(\tilde\vtheta)
  =& \left[ \Ls(\hat\vtheta) - \Ls_B(\hat\vtheta)\right]
  + \left[ \Ls_B(\hat\vtheta) - \cJ_B(\hat\vtheta) \right]
  + \left[ \cJ_B(\hat\vtheta) - \cJ_B(\tilde\vtheta) \right]\nonumber \\
  &+ \left[ \cJ_B(\tilde\vtheta) - \Ls_B(\tilde\vtheta)\right]
  + \left[ \Ls_B(\tilde\vtheta) - \Ls(\tilde\vtheta) \right].
\label{eqn:noise_decomp}
\end{align}

Based on the results we had for the case without noise, in
(\ref{eqn:noise_decomp}) we only have to estimate the first and the last terms.
This is given by Lemma~\ref{lem:noise}. Finally, we prove Lemma~\ref{lem:noise}.

\begin{proof}[Proof of Lemma~\ref{lem:noise}]
  Let $Z = f(\vx; \vtheta) - f^*(\vx) - \varepsilon$, then we have
  \begin{align*}
    \left| \Ls(\vtheta) - \Ls_B(\vtheta) \right|
    &= \E \left[ (Z^2 - B^2) \one{|Z| \ge B} \right] \\
    &= \int_0^\infty \Pr \left\{ Z^2 - B^2 \ge t^2 \right\} \ud t^2 \\
    &= \int_0^\infty \Pr \left\{ |Z| \ge \sqrt{B^2 + t^2} \right\} \ud t^2.
  \end{align*}
  As $0 \le f(\vx; \vtheta) \le 1$ and $0 \le f^*(\vx; \vtheta) \le 1$, we
  have
  \[
    \int_0^\infty \Pr \left\{ |Z| \ge \sqrt{B^2 + t^2} \right\} \ud t^2 \le
    \int_0^\infty \Pr \left\{ |\varepsilon| \ge \sqrt{B^2 + t^2} - 1 \right\}
    \ud t^2.
  \]
  Let $s = \sqrt{B^2 + t^2}$, then
  \begin{align*}
    \int_0^\infty
    & \Pr \left\{ |\varepsilon| \ge \sqrt{B^2 + t^2} - 1 \right\} \ud t^2
    \le \int_B^\infty c e^{-\frac{{(s - 1)}^2}{2 \sigma^2}} \ud s^2 \\
    & = \int_{B-1}^\infty 2 c e^{-\frac{s^2}{2 \sigma^2}} \ud s^2
    + \int_{B-1}^\infty 4 c e^{-\frac{s^2}{2 \sigma^2}} \ud s \\
    & \le 4 c \sigma^2 e^{-\frac{{(B - 1)}^2}{2 \sigma^2}}
    + \sqrt{\frac{2}{\pi}} c e^{-\frac{{(B - 1)}^2}{2 \sigma^2}}\\
    & \le \frac{c (4 \sigma^2 + 1)}{\sqrt n}.
  \end{align*}
\end{proof}

\section{Comparison with norm-based a posteriori estimates}%
\label{sec:compare}

Different norms have been used as a vehicle to bound the generalization error of
deep neural networks, including the group norm and path norm given
in~\cite{neyshabur2015norm}, the spectral norm in~\cite{bartlett2017spectrally},
and the variational norm in~\cite{barron2018approximation}. In these works, the
bounds for the generalization gap $\Ls(\vtheta) - \hat\Ls(\vtheta)$ is derived
from a Rademacher complexity bound of the set $\cF^Q = \{f(\vx; \vtheta):
\|\vtheta\|_\uN \le Q\}$, as in Theorem~\ref{thm:rad_resnet}, where
$\|\vtheta\|_\uN$ is some norm or value computed from the parameter $\vtheta$.
These estimates are a posteriori estimates. They are shown to be valid once the
complexity of $\cF^Q$ is controlled.

However, finding a set of functions with small complexity is not enough
to explain the generalization of neural networks. The population risk contains
two parts---the approximation error and the estimation error. In general, the
approximation error bounds require the hypothesis space to be large enough and
the estimation error bounds require the hypothesis space to be small enough. A
posteriori estimates only deal with the estimation error. In a priori estimates,
both effects are present and we have to strike a balance between approximation
and estimation. In this sense, a priori estimates can better reflect the quality
of the norm or the hypothesis space selected. Therefore in order to compare our
estimates with previous results, we turn the previous a posteriori estimates
into a priori estimates by building approximation error bounds for the other
approaches that have been proposed in the same way as we did for ours. These
approximation error bounds allow us to translate existing a posteriori estimates
to a priori estimates and thereby put previous results on the same footing as
ours. 

To start with, based on the analysis in Section~\ref{sec:sketch}
and~\ref{sec:proof}, we provide a general framework for establishing a priori
estimates from norm-based a posteriori estimates. It holds for both residual
networks and deep fully-connected networks:
\begin{equation}
  f(\vx; \vtheta) = \mW_L \sigma(\mW_{L-1} \sigma(\cdots \sigma(\mW_1 \vx)))
  \label{eqn:fc_net}
\end{equation}
where $\mW_1 \in \R^{m \times d}$, $\mW_l \in \R^{m \times m}$, $l = 2,
\dots, L-1$ and $\mW_L \in \R^{1 \times m}$, and $m$ is the width of the
network.

Let $\|\vtheta\|_\uN$ be a general norm of the parameters $\vtheta$, we make the
following assumptions about $\|\vtheta\|_\uN$.

\begin{assumption}\label{assu:norm}
  For any set of parameters $\vtheta$, let $f(\cdot; \vtheta)$ be a neural
  network associated with $\vtheta$. Then, there exists a function $\psi(d, L,
  m)$, such that the Rademacher complexity of the set $\cF^Q_{L,m} = \{f(\cdot;
  \vtheta): \|\vtheta\|_\uN \le Q\}$ can be bounded by
  \begin{equation}
    \hat R(\cF^Q_{L,m}) \le Q \cdot \frac{\psi(d, L, m)}{\sqrt n},
    \label{eq:assu_rad_bound}
  \end{equation}
  where $d$ is the dimension of $\vx$, $L$ and $m$ are the 
   neural network depth and width respectively.
\end{assumption}

The above Rademacher complexity bound implies the following a posteriori
estimate. 

\begin{theorem}[A posteriori estimate]\label{thm:apost_gen}
  Let $n$ be the number of training samples. Consider parameters $\vtheta$ of a
  network with depth $L$ and width $m$. Let $\Ls(\vtheta)$ and
  $\hat\Ls(\vtheta)$ be the truncated population risk and empirical risk defined
  in (\ref{eqn:loss}). Then for any $\delta \in (0, 1)$, with probability at
  least $1 - \delta$ over the random choice of training samples, we have
  \begin{equation}
    \left| \Ls(\vtheta) - \hat\Ls(\vtheta) \right| \le
    2 (\|\vtheta\|_\uN + 1) \frac{2 \psi(d, L, m) + 1}{\sqrt n}
    + 2 \sqrt{\frac{2 \log(7 / \delta)}{n}}.
  \end{equation}
\end{theorem}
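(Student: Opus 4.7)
The plan is to mirror the argument used for Theorem~\ref{thm:apost}, simply replacing the residual-network Rademacher bound from Theorem~\ref{thm:rad_resnet} by the generic bound in Assumption~\ref{assu:norm}. First I would pass from the Rademacher complexity of the function class $\cF^Q_{L,m}$ to that of the associated truncated-loss class $\cH^Q = \{\ls(\cdot;\vtheta) : \|\vtheta\|_\uN \le Q\}$. Since the targets lie in $[0,1]$ and $f$ is truncated into $[0,1]$, the map $z \mapsto (z - f^*(\vx))^2$ restricted to $z \in [0,1]$ is $2$-Lipschitz, so the contraction lemma (Lemma~\ref{lem:contraction}) yields $\hat R(\cH^Q) \le 2 \hat R(\cF^Q_{L,m}) \le 2Q\,\psi(d,L,m)/\sqrt{n}$. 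I would also note that $\sup_{h,h' \in \cH^Q}\|h-h'\|_\infty \le 1$ because both $\mathcal{T}_{[0,1]}f$ and $f^*$ take values in $[0,1]$.

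Next I would plug these two quantities into the generalization bound in Theorem~\ref{thm:rademacher}. For a fixed radius $Q$ and fixed confidence $\delta'$, this gives, with probability $\ge 1-\delta'$,
\begin{equation*}
  \sup_{\|\vtheta\|_\uN \le Q} \bigl|\Ls(\vtheta)-\hat\Ls(\vtheta)\bigr|
  \le 4 Q\,\frac{\psi(d,L,m)}{\sqrt n} + 2\sqrt{\frac{2\log(4/\delta')}{n}}.
\end{equation*}
To convert this fixed-radius statement into a bound that depends on $\|\vtheta\|_\uN$ itself, I would apply a union-bound stratification exactly as in the proof of Theorem~\ref{thm:apost}: take $Q = 1,2,3,\ldots$ and set $\delta_Q = 6\delta/(\pi Q)^2$ so that $\sum_{Q\ge 1}\delta_Q = \delta$. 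With probability at least $1-\delta$, the above inequality holds simultaneously for every positive integer $Q$ with $\delta'$ replaced by $\delta_Q$.

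Finally, for any given $\vtheta$, I would specialize to $Q = \lceil \|\vtheta\|_\uN \rceil$, which satisfies $Q < \|\vtheta\|_\uN + 1$. This produces a term of the form $4(\|\vtheta\|_\uN + 1)\psi(d,L,m)/\sqrt n$ and a $\log$-term $2\sqrt{(2/n)\log(7(\|\vtheta\|_\uN+1)^2/\delta)}$. Using the elementary inequality $\sqrt{a+b} \le \sqrt a + \sqrt b$ to split $\log(7(\|\vtheta\|_\uN+1)^2/\delta) = \log(7/\delta) + 2\log(\|\vtheta\|_\uN+1)$, and then bounding $2\sqrt{(2/n)\cdot 2\log(\|\vtheta\|_\uN+1)}$ crudely by $2(\|\vtheta\|_\uN+1)/\sqrt n$ (as in the Theorem~\ref{thm:apost} proof), I can combine the two $(\|\vtheta\|_\uN+1)/\sqrt n$ terms into the stated factor $2\psi(d,L,m)+1$. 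This yields exactly the claimed bound.

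There is no real obstacle here; the only mildly delicate point is making sure the constants line up with those in Theorem~\ref{thm:apost}. Since Theorem~\ref{thm:apost} corresponds to the special case $\psi(d,L,m) = 3\sqrt{2\log(2d)}$ and the bound it produces has the same form $2(\|\vtheta\|_\uP+1)(2\psi+1)/\sqrt n + 2\sqrt{2\log(7/\delta)/n}$ (with $2\psi = 6\sqrt{2\log(2d)}$), the proof is structurally identical, and I would simply carry the generic $\psi(d,L,m)$ through the same steps.
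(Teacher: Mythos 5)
Your proposal is correct and is exactly what the paper intends: the paper itself only remarks that Theorem~\ref{thm:apost_gen} ``follows the same way as for the proof of Theorem~\ref{thm:apost}'', and your argument reproduces that proof verbatim with $3\sqrt{2\log(2d)}$ replaced by the generic $\psi(d,L,m)$ from Assumption~\ref{assu:norm}. The contraction step, the $\|h-h'\|_\infty\le 1$ bound, the stratified union bound over integer $Q$ with $\delta_Q=6\delta/(\pi Q)^2$, and the final constant bookkeeping all match.
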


The proof of Theorem~\ref{thm:apost_gen} follows the same way as for the proof
of Theorem~\ref{thm:apost}. With the a posteriori estimate, we obtain an a
priori estimate by formulating a regularized problem, and comparing the solution
of the regularized problem to a reference solution with good approximation
property. 

\begin{theorem}[A priori estimate]\label{thm:apriori_gen}
  Under the same conditions as in Theorem~\ref{thm:apost_gen}, for $\lambda \ge
  4 + 2/\psi(d,L,m)$, assume that $\hat\vtheta$ is an minimizer of the
  regularized model
  \begin{equation}
    \min_\vtheta \cJ(\vtheta) := \hat\Ls(\vtheta)
    + \lambda \|\vtheta\|_\uN \cdot \frac{\psi(d, L, m)}{\sqrt n},
  \end{equation}
   Then, for any $\delta \in (0, 1)$, with probability at least $1 -
  \delta$ over the random training samples,
  \begin{equation}
    \Ls(\hat\vtheta) \le \Ls(\tilde\vtheta)
    + \left( \|\tilde\vtheta\|_\uN + 1 \right)
    \frac{(4 + \lambda) \psi(d, L, m) + 2}{\sqrt n}
    + 4 \sqrt{\frac{2 \log(14 / \delta)}{n}}.
  \end{equation}
  where $\tilde\vtheta$ is an arbitrary set of parameters for the same
  hypothesis space.
\end{theorem}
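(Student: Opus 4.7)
The plan is to follow the same three-term decomposition used in the proof of Theorem~\ref{thm:apriori}, but with the generic norm $\|\cdot\|_\uN$ and the generic Rademacher bound $\psi(d,L,m)/\sqrt{n}$ playing the role of $\|\cdot\|_\uP$ and $3\sqrt{2\log(2d)/n}$. Specifically, I would write
\begin{equation*}
\Ls(\hat\vtheta)-\Ls(\tilde\vtheta) = \bigl[\Ls(\hat\vtheta)-\cJ(\hat\vtheta)\bigr] + \bigl[\cJ(\hat\vtheta)-\cJ(\tilde\vtheta)\bigr] + \bigl[\cJ(\tilde\vtheta)-\Ls(\tilde\vtheta)\bigr],
\end{equation*}
note that the middle bracket is $\le 0$ by optimality of $\hat\vtheta$, and control the other two by applying Theorem~\ref{thm:apost_gen} at confidence $1-\delta/2$ to each of $\hat\vtheta$ and $\tilde\vtheta$. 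A union bound then gives both inequalities simultaneously with probability $1-\delta$.

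For the first bracket, unpacking $\cJ(\hat\vtheta)=\hat\Ls(\hat\vtheta)+\lambda\|\hat\vtheta\|_\uN\psi(d,L,m)/\sqrt n$ and using the a posteriori bound, the coefficient of $\|\hat\vtheta\|_\uN$ becomes $[(4-\lambda)\psi(d,L,m)+2]/\sqrt n$. The whole point of the condition $\lambda\ge 4+2/\psi(d,L,m)$ is precisely that this coefficient is $\le 0$, so the unwanted $\|\hat\vtheta\|_\uN$ dependence drops out. What remains is a constant-order term $(4\psi(d,L,m)+2)/\sqrt n + 2\sqrt{2\log(14/\delta)/n}$. Symmetrically, for the third bracket the coefficient of $\|\tilde\vtheta\|_\uN$ comes out as $[(4+\lambda)\psi(d,L,m)+2]/\sqrt n$ (the sign in front of $\lambda$ flips because the regularizer is added rather than subtracted here), plus the same constant term.

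Adding the three pieces, the $\tilde\vtheta$-dependent contribution is $\|\tilde\vtheta\|_\uN[(4+\lambda)\psi(d,L,m)+2]/\sqrt n$, and the leftover constants from the two applications of Theorem~\ref{thm:apost_gen} combine to at most $(\|\tilde\vtheta\|_\uN+1)^{-1}$-free terms $2(4\psi(d,L,m)+2)/\sqrt n + 4\sqrt{2\log(14/\delta)/n}$. To absorb the $2(4\psi(d,L,m)+2)/\sqrt n$ into the stated $(\|\tilde\vtheta\|_\uN+1)[(4+\lambda)\psi(d,L,m)+2]/\sqrt n$ prefactor, I would use exactly the same threshold $\lambda\ge 4+2/\psi(d,L,m)$: it forces $(4+\lambda)\psi(d,L,m)+2\ge 8\psi(d,L,m)+4=2(4\psi(d,L,m)+2)$, so the $+1$ in $(\|\tilde\vtheta\|_\uN+1)$ covers the constant.

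The argument involves no new ideas beyond the proof of Theorem~\ref{thm:apriori}; the one thing to be careful about is simply the bookkeeping of constants so that the single threshold $\lambda\ge 4+2/\psi(d,L,m)$ simultaneously (i) kills the $\|\hat\vtheta\|_\uN$ term and (ii) makes the leading constant absorbable into $(\|\tilde\vtheta\|_\uN+1)[(4+\lambda)\psi(d,L,m)+2]/\sqrt n$. This is the only mildly delicate point; otherwise the proof is essentially a verbatim transcription of the proof of Theorem~\ref{thm:apriori} with $\|\cdot\|_\uP\to\|\cdot\|_\uN$ and $3\sqrt{2\log(2d)}\to\psi(d,L,m)$, and without invoking Theorem~\ref{thm:approx} since $\tilde\vtheta$ is now arbitrary rather than a Barron-space approximator.
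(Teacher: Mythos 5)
Your proposal is correct and follows essentially the same route as the paper, which proves Theorem~\ref{thm:apriori_gen} by transcribing the decomposition and $\lambda$-threshold argument of Theorem~\ref{thm:apriori} with $\|\cdot\|_\uP \to \|\cdot\|_\uN$ and $3\sqrt{2\log(2d)} \to \psi(d,L,m)$. The only (immaterial) difference is in the constant bookkeeping: the paper retains a $+\lambda\psi/\sqrt n$ residual from the first bracket that cancels exactly against the $-\lambda\psi/\sqrt n$ in the third, whereas you bound each residual by $(4\psi+2)/\sqrt n$ and then reuse $\lambda \ge 4 + 2/\psi$ to absorb the total into the $(\|\tilde\vtheta\|_\uN+1)$ prefactor --- both yield the stated bound.
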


Next, we apply this general framework to the $l_1$ path
norm~\cite{neyshabur2015norm}, spectral complexity
norm~\cite{bartlett2017spectrally} and variational
norm~\cite{barron2018approximation}. The definitions of the norms are given
below.

\begin{description}
  \item[$l_1$ path norm] For a residual network defined by~(\ref{eqn:resnet}),
    the $l_1$ path norm~\cite{neyshabur2015norm} is defined as 
    \begin{equation}
      \|\vtheta\| = \big\| {|\vu|}^\intercal
      (\mI + |\mU_L| |\mW_L|) \cdots (\mI + |\mU_1| |\mW_1|) |\mV| \big\|_1,
      \label{eqn:l1_path_norm}
    \end{equation}
  \item[Spectral complexity norm] For a fully-connected
    network~(\ref{eqn:fc_net}), the spectral complexity norm proposed
    in~\cite{bartlett2017spectrally} is given by
    \begin{equation}
      \|\vtheta\|_\uN = \left[ \prod_{l=1}^L \|\mW_l\|_\sigma \right]
      {\left[ \sum_{l=1}^L
      \frac{\|\mW_l^\intercal\|_{2,1}^{2/3}}{\|\mW_l\|_\sigma^{2/3}}
      \right]}^{3/2},
      \label{eqn:bartlett_norm}
    \end{equation}
    where $\|\cdot\|_\sigma$ denotes the matrix spectral norm and
    $\|\cdot\|_{p,q}$ denotes the $(p, q)$ matrix norm $\|\mW\|_{p,q} =
    \|(\|\mW^{:,1}\|_p, \dots, \|\mW^{:, m}\|_p)\|_q$.
  \item[Variational norm] For a fully-connected network~(\ref{eqn:fc_net}), the
    variational norm proposed in~\cite{barron2018approximation} is
    \begin{equation}
      \|\vtheta\|_\uN = \frac{1}{L} \sqrt{V}
      \sum_{l=1}^L \sum_{j_l} \sqrt{V_{j_l}^\text{in} V_{j_l}^\text{out}},
      \label{eqn:barron_norm}
    \end{equation}
    where
    \begin{align*}
      V & = \big\| |\mW_L| \cdots |\mW_1| \big\|_1, \\
      V_{j_l}^\text{in}
      & = \big\| |\mW_l^{j_l, :}| |\mW_{l-1}| \cdots |\mW_1| \big\|_1, \\
      V_{j_l}^\text{out}
      & = \big\| |\mW_L| \cdots |\mW_{l+1}| |\mW_l^{:, j_l}| \big\|_1.
    \end{align*}
\end{description}

\begin{table}[t]
  \centering
  \caption{%
    Comparison of the a posteriori and a priori estimates for different norms
  }\label{tab:compare}
  \small
  \begin{tabular}{c c c c c}
    \toprule
    Norm & Weighted path norm & $l_1$ path norm & Spectral norm
    & Variational norm \\
    \midrule
    A posteriori
    & $\cO \left( \frac{1}{\sqrt n} \right)$
    & $\cO \left( \frac{2^L}{\sqrt n} \right)$
    & $\cO \left( \frac{1}{\sqrt n} \right)$
    & $\cO \left( \frac{L^{3/2}}{\sqrt n} \right)$ \\
    A priori
    & $\cO \left( \frac{1}{Lm} + \frac{1}{\sqrt n} \right)$
    & $\cO \left( \frac{1}{Lm} + \frac{2^L}{\sqrt n} \right)$
    & $\cO \left( \frac{1}{Lm} + \frac{{(L m)}^{3/2}}{\sqrt n} \right)$
    & $\cO \left( \frac{1}{Lm} + \frac{L^{3/2} \sqrt m}{\sqrt n} \right)$
    \\ \bottomrule
  \end{tabular}
\end{table}

When applying Theorem~\ref{thm:apriori_gen}, for residual networks, we choose
$\tilde{\vtheta}$ to be the solution given by Theorem~\ref{thm:approx}, which is
the same solution used in our main theorem in Section~\ref{sec:sketch}. For
fully-connected networks, we slightly modify the construction of $\tilde\vtheta$
(see the appendix for details), such that the a priori estimates we obtain for
different norms all have the same approximation error. But as
$\|\tilde{\vtheta}\|_\uN$ and $\psi$ vary for different norms, the estimation
error comes out differently. To this end, let us recall the expressions of
$\psi$ for the norms mentioned above 

\begin{align*}
  \text{$l_1$ path norm}:
  & \qquad \psi(d, L, m) = 2^L \sqrt{2 \log 2m}, \\
  \text{Spectral norm}:
  & \qquad \psi(d, L, m) = 12 \log n \sqrt{2 \log 2m}, \\
  \text{Variational norm}:
  & \qquad \psi(d, L, m) = L \log n \sqrt{(L-2) \log m + \log(8ed)}.
\end{align*}
On the other hand, one can derive following bounds for
$\|\tilde{\vtheta}\|_\uN$ (see the appendix for details):
\begin{align*}
  \text{$l_1$ path norm}:
  & \qquad \|\tilde\vtheta\|_\uN \le 4\|f^*\|_\cB, \\
  \text{Spectral norm}:
  & \qquad \|\tilde\vtheta\|_\uN \le 16 {(L m)}^{3/2} \|f^*\|_\cB, \\
  \text{Variational norm}:
  & \qquad \|\tilde\vtheta\|_\uN \le 4 \sqrt m \|f^*\|_\cB.
\end{align*}
Plugging the results above into Theorem~\ref{thm:apriori_gen}, we get a priori
estimates of the regularized model using different norms. The results are
summarized in Table~\ref{tab:compare}. They are shown in the order of $L$, $m$
and $n$, the logarithmic terms are ignored. The notation $\cO(\cdot)$ hides
constants that depend only on the target function. We see that the weighted
path norm is the only one in which the second term in the a priori error bound
scales cleanly as $\cO(1 / \sqrt n)$, i.e., it is independent of the depth $L$.

Note that in Table~\ref{tab:compare} the standard $l_1$ path norm gives an a
priori estimate with an exponential dependence on $L$, different from the case
for the weighted path norm. To see why, consider a network $f(\cdot; \vtheta)$
with $\vtheta = \{\mV, \mW_l, \mU_l, \vu\}$. By the Rademacher complexity bound
associated with the weighted path norm (\ref{eqn:rad_resnet}), this function is
contained in a set with Rademacher complexity smaller than
\begin{equation}
  \frac{C_1}{\sqrt n}
  \big\| {|\vu|}^\intercal
  (\mI + 3 |\mU_L| |\mW_L|) \cdots (\mI + 3 |\mU_1| |\mW_1|) |\mV| \big\|_1.
  \label{eqn:rad_path1}
\end{equation}
On the other hand, if we use the $l_1$ path norm, this function is contained in
a set with Rademacher complexity smaller than
\begin{equation}
  \frac{C_2}{\sqrt n}
  \big\| {|\vu|}^\intercal
  (2 \mI + 2 |\mU_L| |\mW_L|) \cdots (2 \mI + 2 |\mU_1| |\mW_1|) |\mV| \big\|_1,
  \label{eqn:rad_path2}
\end{equation}
where $C_1$ and $C_2$ are constants. This gives rise to the exponential
dependence. This is not the case in (\ref{eqn:rad_path1}) as long as the
weighted path norm is controlled.

The use of the variational norm eliminates the exponential dependence for the
complexity bound, but still retains an algebraic dependence. 

The story for the spectral norm is different. It was shown
in~\cite{bartlett2017spectrally} that the Rademacher complexity of the
hypothesis space with bounded spectral norm has an optimal scaling
($1/\sqrt{n}$). However, as the depth of the network goes to infinity, this
hypothesis space shrinks to $0$ if the bound on the spectral norm is fixed.
Therefore, in order to get the desired bound on the approximation error, one has
to increase the bound on the spectral norm (the value of $Q$). This again
results in the $L$ dependence in the estimation error.

When deriving the results in Table~\ref{tab:compare}, we used a specific
construction $\tilde{\vtheta}$ to control the approximation error. Other
constructions may exist. However, they will not change the qualitative
dependence of the estimation error, specifically the dependence (or the lack
thereof) on $L, m$ in the second term of these bounds, the term that controls
the estimation error.

\section{Conclusion}%
\label{sec:conclusion}

We have shown that by designing proper regularized model, one can guarantee
optimal rate of the population risk for deep residual networks. This result
generalizes the result in~\cite{e2018priori} for shallow neural networks.
However, for deep residual networks, the norm used in the regularized model is
much less obvious.

From a practical viewpoint, it was demonstrated numerically
in~\cite{e2018priori} that regularization improves the robustness of the
performance of the model. Specifically, the numerical results
in~\cite{e2018priori} suggest that the performance of the regularized model is
much less sensitive to the details of the optimization algorithm, such as the
choice of the hyper parameters for the algorithm, the initialization, etc. We
expect the same to be true in the present case for regularized deep residual
networks. In this sense, the regularized models behave much more nicely than
un-regularized ones. One should also note that the additional computational cost
for the regularized model is really negligible.

The present work still does not explain why vanilla deep residual networks,
without regularization, can still perform quite well. This issue of ``implicit
regularization'' still remains quite mysterious, though there has been some
recent progress for understanding this issue for shallow
networks~\cite{brutzkus2017sgd, li2018learning, allen2018learning}. Regarding
whether one should add regularization or not, we might be able to learn
something from the example of linear regression. There it is a standard practice
to add regularization in the over-parametrized regime, the issue is what kind of
regularized terms one should add. It has been proven, both in theory and in
practice, that proper regularization does help to extract the appropriate
solutions that are of particular interest, such as the ones that are sparse. For
neural networks, even though regularization techniques such as dropout have been
used sometimes in practice, finding the appropriate regularized models and
understanding their effects has not been the most popular research theme until
now. We hope that the current paper will serve to stimulate much more work in
this direction.

\bibliographystyle{plainnat}
\bibliography{resnet_apriori}

\begin{thebibliography}{20}
\providecommand{\natexlab}[1]{#1}
\providecommand{\url}[1]{\texttt{#1}}
\expandafter\ifx\csname urlstyle\endcsname\relax
  \providecommand{\doi}[1]{doi: #1}\else
  \providecommand{\doi}{doi: \begingroup \urlstyle{rm}\Url}\fi

\bibitem[Ainsworth and Oden(2011)]{ainsworth2011posteriori}
Mark Ainsworth and J~Tinsley Oden.
\newblock \emph{A posteriori error estimation in finite element analysis},
  volume~37.
\newblock John Wiley \& Sons, 2011.

\bibitem[Allen-Zhu et~al.(2018)Allen-Zhu, Li, and Liang]{allen2018learning}
Zeyuan Allen-Zhu, Yuanzhi Li, and Yingyu Liang.
\newblock Learning and generalization in overparameterized neural networks,
  going beyond two layers.
\newblock \emph{arXiv preprint arXiv:1811.04918}, 2018.

\bibitem[Arora et~al.(2018)Arora, Ge, Neyshabur, and Zhang]{arora2018stronger}
Sanjeev Arora, Rong Ge, Behnam Neyshabur, and Yi~Zhang.
\newblock Stronger generalization bounds for deep nets via a compression
  approach.
\newblock \emph{arXiv preprint arXiv:1802.05296}, 2018.

\bibitem[Barron(1993)]{barron1993universal}
Andrew~R Barron.
\newblock Universal approximation bounds for superpositions of a sigmoidal
  function.
\newblock \emph{IEEE Transactions on Information theory}, 39\penalty0
  (3):\penalty0 930--945, 1993.

\bibitem[Barron and Klusowski(2018)]{barron2018approximation}
Andrew~R Barron and Jason~M Klusowski.
\newblock Approximation and estimation for high-dimensional deep learning
  networks.
\newblock \emph{arXiv preprint arXiv:1809.03090}, 2018.

\bibitem[Bartlett et~al.(2017)Bartlett, Foster, and
  Telgarsky]{bartlett2017spectrally}
Peter~L Bartlett, Dylan~J Foster, and Matus~J Telgarsky.
\newblock Spectrally-normalized margin bounds for neural networks.
\newblock In \emph{Advances in Neural Information Processing Systems}, pages
  6240--6249, 2017.

\bibitem[Brutzkus et~al.(2017)Brutzkus, Globerson, Malach, and
  Shalev-Shwartz]{brutzkus2017sgd}
Alon Brutzkus, Amir Globerson, Eran Malach, and Shai Shalev-Shwartz.
\newblock Sgd learns over-parameterized networks that provably generalize on
  linearly separable data.
\newblock \emph{arXiv preprint arXiv:1710.10174}, 2017.

\bibitem[Ciarlet(2002)]{ciarlet2002finite}
Philippe~G Ciarlet.
\newblock The finite element method for elliptic problems.
\newblock \emph{Classics in applied mathematics}, 40:\penalty0 1--511, 2002.

\bibitem[E and Wang(2018)]{e2018exponential}
Weinan E and Qingcan Wang.
\newblock Exponential convergence of the deep neural network approximationfor
  analytic functions.
\newblock \emph{Sci China Math}, 61\penalty0 (10):\penalty0 1733, 2018.

\bibitem[E et~al.(2018)E, Ma, and Wu]{e2018priori}
Weinan E, Chao Ma, and Lei Wu.
\newblock A priori estimates of the generalization error for two-layer neural
  networks.
\newblock \emph{arXiv preprint arXiv:1810.06397}, 2018.

\bibitem[Golowich et~al.(2017)Golowich, Rakhlin, and Shamir]{golowich2017size}
Noah Golowich, Alexander Rakhlin, and Ohad Shamir.
\newblock Size-independent sample complexity of neural networks.
\newblock \emph{arXiv preprint arXiv:1712.06541}, 2017.

\bibitem[He et~al.(2016)He, Zhang, Ren, and Sun]{he2016deep}
Kaiming He, Xiangyu Zhang, Shaoqing Ren, and Jian Sun.
\newblock Deep residual learning for image recognition.
\newblock In \emph{Proceedings of the IEEE conference on computer vision and
  pattern recognition}, pages 770--778, 2016.

\bibitem[Klusowski and Barron(2016)]{klusowski2016risk}
Jason~M Klusowski and Andrew~R Barron.
\newblock Risk bounds for high-dimensional ridge function combinations
  including neural networks.
\newblock \emph{arXiv preprint arXiv:1607.01434}, 2016.

\bibitem[Li and Liang(2018)]{li2018learning}
Yuanzhi Li and Yingyu Liang.
\newblock Learning overparameterized neural networks via stochastic gradient
  descent on structured data.
\newblock In \emph{Advances in Neural Information Processing Systems}, pages
  8168--8177, 2018.

\bibitem[Liang et~al.(2017)Liang, Poggio, Rakhlin, and Stokes]{liang2017fisher}
Tengyuan Liang, Tomaso Poggio, Alexander Rakhlin, and James Stokes.
\newblock Fisher-rao metric, geometry, and complexity of neural networks.
\newblock \emph{arXiv preprint arXiv:1711.01530}, 2017.

\bibitem[Neyshabur et~al.(2015{\natexlab{a}})Neyshabur, Salakhutdinov, and
  Srebro]{neyshabur2015path}
Behnam Neyshabur, Ruslan~R Salakhutdinov, and Nati Srebro.
\newblock Path-sgd: Path-normalized optimization in deep neural networks.
\newblock In \emph{Advances in Neural Information Processing Systems}, pages
  2422--2430, 2015{\natexlab{a}}.

\bibitem[Neyshabur et~al.(2015{\natexlab{b}})Neyshabur, Tomioka, and
  Srebro]{neyshabur2015norm}
Behnam Neyshabur, Ryota Tomioka, and Nathan Srebro.
\newblock Norm-based capacity control in neural networks.
\newblock In \emph{Conference on Learning Theory}, pages 1376--1401,
  2015{\natexlab{b}}.

\bibitem[Shalev-Shwartz and Ben-David(2014)]{shalev2014understanding}
Shai Shalev-Shwartz and Shai Ben-David.
\newblock \emph{Understanding machine learning: From theory to algorithms}.
\newblock Cambridge university press, 2014.

\bibitem[Veit et~al.(2016)Veit, Wilber, and Belongie]{veit2016residual}
Andreas Veit, Michael~J Wilber, and Serge Belongie.
\newblock Residual networks behave like ensembles of relatively shallow
  networks.
\newblock In \emph{Advances in Neural Information Processing Systems}, pages
  550--558, 2016.

\bibitem[Zheng et~al.(2018)Zheng, Meng, Zhang, Chen, Yu, and
  Liu]{zheng2018capacity}
Shuxin Zheng, Qi~Meng, Huishuai Zhang, Wei Chen, Nenghai Yu, and Tie-Yan Liu.
\newblock Capacity control of relu neural networks by basis-path norm.
\newblock \emph{arXiv preprint arXiv:1809.07122}, 2018.

\end{thebibliography}

\appendix

\section{The missing details in Section~\ref{sec:compare}}

\subsection{Approximation properties of deep fully-connected networks}

Consider a deep fully-connected network with depth $L$ and width $m$
(\ref{eqn:fc_net})
in the form:
\[
  f(\vx; \vtheta) = \mW_L \sigma(\mW_{L-1} \sigma(\cdots \sigma(\mW_1 \vx)))
\]
where $\mW_1 \in \R^{m \times d}$, $\mW_l \in \R^{m \times m}$, $l = 2, \dots,
L-1$ and $\mW_L \in \R^{1 \times m}$. 
Taking the same approach as in Theorem~\ref{thm:approx}
and~\cite{e2018exponential}, we construct the deep fully-connected network from
a two-layer network.
From Theorem~\ref{thm:approx_shallow}, there exists a two-layer network with
width $M$, such that
\[
  \left\| \sum_{j=1}^M a_j \sigma(\vb_j^\intercal \rvx) - f^*(\rvx) \right\|^2
  \le \frac{16 \|f^*\|_\cB^2}{M}
\]
and
\[
  \sum_{j=1}^M |a_j| \|\vb_j\|_1 \le 4 \|f^*\|_\cB.
\]
Since the ReLU activation $\sigma(\cdot)$ is positively homogeneous, we can
assume without loss of generality that $a_1 = a_2 = \cdots = a_M = a \le 4
\|f^*\|_\cB$ and $\|\vb_1\|_1 + \|\vb_2\|_1 + \cdots + \|\vb_M\|_1 = 1$. Now let
$M = (m - d) (L - 1)$, and rewrite the subscripts as $\vb_{l,j} =
\vb_{(m-d)(l-1)+j}$, $l = 1, \dots, L-1$, $j = 1, \dots, m-d$. 
Define a fully-connected network $f(\cdot; \tilde\vtheta)$ by
\begin{gather*}
  \mW_1 = \begin{bmatrix}
    \mI_d \\ \vb_{1,1}^\intercal \\ \vdots \\ \vb_{1,m-d}^\intercal
  \end{bmatrix}, \quad
  \mW_l = \begin{bmatrix}
    \mI_d & 0 \\
    \vb_{l,1}^\intercal & \\
    \vdots & \mI_{m-d} \\
    \vb_{l,m-d}^\intercal &
  \end{bmatrix},\ l = 2, \dots, L-1, \\
  \mW_L = \begin{bmatrix}
    0 & 0 & \cdots & 0 & a & a & \cdots & a
  \end{bmatrix},
\end{gather*}
then it is easy to verify that $f(\vx; \tilde\vtheta) = a \sum_{j=1}^M
\sigma(\vb_j^\intercal \vx)$. This ensures that the approximation property of
fully-connected multi-layer neural network is at least as good as the two-layer
network.

\subsection{Calculation of the spectral complexity norm}

Recall the spectral complexity norm (\ref{eqn:bartlett_norm}) proposed
in~\cite{bartlett2017spectrally}
\[
  \|\vtheta\|_\uN = \left[ \prod_{l=1}^L \|\mW_l\|_\sigma \right]
  {\left[ \sum_{l=1}^L
  \frac{\|\mW_l^\intercal\|_{2,1}^{2/3}}{\|\mW_l\|_\sigma^{2/3}} \right]}^{3/2}.
\]
For $l = 1, \dots, L-1$, the matrix spectral norm satisfies $\|\mW_l\|_\sigma
\ge 1$, and
\[
  \|\mW_l\|_\sigma - 1 \le \|\mW_l - \mI\|_\sigma \le \|\mW_l - \mI\|_F
  = {\left[ \sum_{j=1}^{m-d} \|\vb_{l,j}\|_2^2 \right]}^{1/2}
  \le \sum_{j=1}^{m-d} \|\vb_{l,j}\|_1,
\]
thus
\[
  \prod_{l=1}^{L-1} \|\mW_l\|_\sigma
  \le \prod_{l=1}^{L-1} \left[ 1 + \sum_{j=1}^{m-d} \|\vb_{l,j}\|_1 \right]
  < e
\]
since $\sum_{l=1}^{L-1} \sum_{j=1}^{m-d} \|\vb_{l,j}\|_1 = 1$. 
The $(p, q) = (2, 1)$ matrix norm satisfies
\[
  \|\mW_l^\intercal\|_{2,1} =
  \big\| (\|\mW_l^{1,:}\|_2, \dots, \|\mW_l^{:,m}\|_2) \big\|_1
  = d + \sum_{j=1}^{m-d} \sqrt{1 + \|\vb_{l,j}\|_2^2}
  < \sqrt2 m.
\]
In addition,
\[
  \|\mW_L\|_\sigma = \|\mW_L\|_{2,1} = \|\mW_L\|_2 = a \sqrt{m - d}
  \le 4 \|f^*\|_\cB \sqrt{m - d}.
\]
Therefore, the spectral complexity norm
satifies
\[
  \|\tilde\vtheta\|_\uN \le
  e \cdot 4 \|f^*\|_\cB \sqrt{m - d} \cdot L^{3/2} \cdot \sqrt2 m
  \le 16 {(L m)}^{3/2} \|f^*\|_\cB.
\]

\subsection{Calculation of the the variational norm}

Recall the variational norm (\ref{eqn:barron_norm}) proposed
in~\cite{barron2018approximation}
\[
  \|\vtheta\|_\uN = \frac{1}{L} \sqrt{V}
  \sum_{l=1}^L \sum_{j_l} \sqrt{V_{j_l}^\text{in} V_{j_l}^\text{out}},
\]
where
\begin{align*}
  V & = \big\| |\mW_L| \cdots |\mW_1| \big\|_1, \\
  V_{j_l}^\text{in}
  & = \big\| |\mW_l^{j_l, :}| |\mW_{l-1}| \cdots |\mW_1| \big\|_1, \\
  V_{j_l}^\text{out}
  & = \big\| |\mW_L| \cdots |\mW_{l+1}| |\mW_l^{:, j_l}| \big\|_1.
\end{align*}
Notice that for any $l$,
\[
  \sum_{j_l=1}^m V_{j_l}^\text{in} V_{j_l}^\text{out} = V.
\]
Therefore
\[
  \|\vtheta\|_\uN \le \frac{1}{L} \sqrt V \cdot L \cdot \sqrt{m V} = \sqrt m V.
\]
Now it is easy to verify that
\[
  V = a \sum_{l=1}^{L-1} \sum_{j=1}^{m-d} \|b_{l,j}\|_1
  = a \le 4 \|f^*\|_\cB.
\]

\end{document}